\documentclass{article}

\usepackage[preprint]{neurips_2026}
\workshoptitle{AI for Stochastic Dynamics: From Theoretical
  Foundations to Scientific Applications}
\makeatletter
\renewcommand{\@noticestring}{Preprint.}
\makeatother

\usepackage[utf8]{inputenc}
\usepackage[T1]{fontenc}
\usepackage{hyperref}
\usepackage{url}
\usepackage{booktabs}
\usepackage{amsfonts}
\usepackage{dsfont}   
\usepackage{nicefrac}
\usepackage{microtype}
\usepackage{xcolor}
\usepackage{amsmath,amssymb,amsthm}
\usepackage{graphicx}
\usepackage{subcaption}
\usepackage{enumitem}
\usepackage{pgfplots}
\pgfplotsset{compat=1.18}
\usepgfplotslibrary{fillbetween,groupplots}

\newtheorem{definition}{Definition}
\newtheorem{lemma}{Lemma}
\newtheorem{proposition}{Proposition}
\newtheorem{corollary}{Corollary}

\title{Forecasting Multiple Observables with SCROLL:\\
  Score-Trained Uncertainty for Stochastic Dynamics}

\author{%
  Pavel Proch\'azka\\
  Cisco Inc.\\
  \texttt{paprocha@cisco.com}
}

\begin{document}

\maketitle

\begin{abstract}
Forecasting a stochastic dynamical system rarely means a single
number: one wants several observables---future state, threshold event,
regime label---each with its own likelihood. Standard multi-task
recipes balance per-task losses, tuned or learned. We instead compose
the observables' likelihoods in per-task free-routed last-layer
beliefs on a shared backbone; this absorbs unit-dependent loss scaling
into $O(K)$ likelihood parameters learned in the same gradient pass. Stochastic dynamics supply what static
benchmarks cannot: computable ground truth for the predictive
variance. Results land where theory puts them: on the well-specified,
homoscedastic Ornstein--Uhlenbeck process the learned predictive law
recovers the analytic kernel and correctly specified baselines tie.
On heteroscedastic systems (stochastic Lorenz-63, real air-quality
data) the belief's input-dependent variance separates: best
single-run NLL on the state and
regime tasks, calibration matched only by arms whose NLL it beats, at
a fraction of the tuned grids' cost. On the real series the state
margin holds across five rolling origins.
\end{abstract}

\section{Introduction}
\label{sec:intro}


A forecaster attached to a stochastic dynamical system is typically asked
for several observables of the same window at once: the state at lead time
$\Delta$, a barrier-crossing probability, a coarse regime
label. Treated as multi-task learning (MTL) on a shared backbone,
the standard recipes minimise a weighted sum of per-task losses, with
weights set by grid search---exponential in the number of
tasks---or learned as a single scale per task
\citep{kendall2018multi,chen2018gradnorm,yu2020pcgrad}. But weights
only rescale the losses; the choice of each task's \emph{likelihood}
is a separate axis. We exhibit such a failure: a cross-entropy head
that cannot represent the barrier's offset boundary fails at any
weight, tuned or learned, while the threshold likelihood
carries the offset in its cutpoint and repairs it.

We take the Bayesian route instead: each observable keeps its own
observation model and its own Gaussian last-layer belief on one
jointly trained backbone. The construction is SCROLL (Shared-Cavity
fRee-rOuting Last-Layer;~\citealp{prochazka2026bethe}): its loss
keeps one local log-partition term per factor of the model as
written. The joint factor graph thus \emph{derives} the composed
objective---a sum of per-task shared-cavity losses, each belief's
variance free to track state-dependent spread
(Section~\ref{sec:background}).

\textbf{Contributions.}
We propose SCROLL-MT (Definition~\ref{def:fmt}), which is, factor by
factor, the sum of $K$ free-routed single-task SCROLL objectives on
one backbone (Lemma~\ref{lem:compose}). The composition carries the
properties: no outer unit-balancing weight search, the weight slots
filled by $O(K)$ likelihood parameters in one gradient pass at a
fixed equal-per-task convention (Corollary~\ref{cor:weights}); a
population optimum at the score-optimal predictive law
(Corollary~\ref{cor:mt_score}); and scale equivariance without weight
retuning, which a fixed weighted sum of raw losses lacks
(Proposition~\ref{prop:scale}). We validate each on stochastic
differential equation (SDE) corpora with computable conditional
spread $V^\star(x)$ and on a real series, against tuned, learned, and
separately trained baselines.

\section{Free routing and likelihood composition in the last layer}
\label{sec:background}


Data are $N$ windows of the same series: features $\psi_n=\psi(x_n)$
from a shared deterministic backbone, and per window the $K$
observables $y^{(1)}_n,\dots,y^{(K)}_n$. Each task $k$ keeps its own
likelihood $p_k(y^{(k)}\mid w_k^\top\psi;\eta_k)$ with parameters
$\eta_k$, its own belief $q_k(w_k)=\mathcal{N}(\mu_k,\Sigma_k)$ on
task $k$'s last-layer weights $w_k$, and prior
$p_{\alpha_k}(w_k)=\mathcal{N}(0,\alpha_k^{-1}I)$; only the backbone
is shared.

\begin{definition}[SCROLL Multi-Task (SCROLL-MT): free-routing composed objective]
\label{def:fmt}
The objective, minimised jointly over $\psi$ and, per task, over
$(\mu_k,\Sigma_k,\alpha_k,\eta_k)$, is
\begin{equation}
  F_\text{MT}
  \;=\; \sum_{k=1}^{K} \Bigl[
    -\sum_{n=1}^{N} \log \int p_k\bigl(y^{(k)}_n \mid w_k^\top\psi_n;
      \eta_k\bigr)\, q_k(w_k)\, dw_k
    \;-\; \log \int p_{\alpha_k}(w_k)\, q_k(w_k)\, dw_k \Bigr],
  \label{eq:fmt}
\end{equation}
\end{definition}

\begin{lemma}[Composition]
\label{lem:compose}
$F_\text{MT}$ is, term by term, the sum of $K$ SCROLL objectives (Appendix~\ref{app:scroll}): task $k$'s shared-cavity loss
at $(\mu_k,\Sigma_k,\alpha_k,\eta_k)$, coupled only through the
shared $\psi$---each plate scored against its task's full belief, the
\emph{shared cavity} of the acronym; $K{=}1$ recovers SCROLL.
\end{lemma}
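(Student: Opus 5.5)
The plan is to unfold both sides from their definitions and match them term by term. The single-task SCROLL objective of Appendix~\ref{app:scroll} is recalled in its free-routing, shared-cavity form. For a likelihood $p(y\mid w^\top\psi;\eta)$, belief $q=\mathcal{N}(\mu,\Sigma)$ and prior $p_\alpha$, it keeps one local log-partition term per factor of the factor graph. Each of the $N$ likelihood factors contributes $-\log\int p(y_n\mid w^\top\psi_n;\eta)\,q(w)\,dw$, and the prior factor contributes $-\log\int p_\alpha(w)\,q(w)\,dw$. Free routing means each factor is scored against the full belief $q$ rather than against a factor-specific cavity $q/\tilde f_n$; this is the shared cavity. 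Writing this as $F_\text{S}(\psi;\mu,\Sigma,\alpha,\eta;\{y_n\})$, the claim becomes the identity
\[
F_\text{MT}=\sum_{k=1}^K F_\text{S}\bigl(\psi;\mu_k,\Sigma_k,\alpha_k,\eta_k;\{y^{(k)}_n\}\bigr).
\]

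The steps run as follows. First, write the joint factor graph of the multi-task model. The variables are $w_1,\dots,w_K$. The factors are the $NK$ likelihood factors $p_k(y^{(k)}_n\mid w_k^\top\psi_n;\eta_k)$ and the $K$ prior factors $p_{\alpha_k}(w_k)$, together with a mean-field-structured belief $q(w_1,\dots,w_K)=\prod_k q_k(w_k)$. Second, apply the SCROLL recipe to this joint graph: one local log-partition term per factor, each taken against the full belief $q$. Third, observe that every factor depends on a single $w_k$. So for a factor $f$ touching only $w_k$, marginalising over the other blocks gives $\int f\,\prod_j q_j\,dw_1\cdots dw_K=\int f\,q_k\,dw_k$, because each $q_j$ integrates to one; this is Fubini applied to a product of Gaussians. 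Fourth, group the resulting terms by $k$. Each group is exactly the bracket in \eqref{eq:fmt}, and each bracket is $F_\text{S}$ for task $k$. The only argument common to different groups is $\psi$, since $(\mu_k,\Sigma_k,\alpha_k,\eta_k)$ appear only in group $k$. This gives the coupling statement. Setting $K=1$ leaves a single bracket, which is SCROLL verbatim.

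The main obstacle is the second step: justifying that the SCROLL recipe on the joint graph, with the full belief as shared cavity, is the same object as the appendix's single-task definition with that task's full belief. In particular, no cross-task entropy or normalisation term must survive. Most of the argument is definitional bookkeeping. The point needing care is that the appendix's objective has no global entropy term, only local log-partitions, so nothing couples the $q_k$ beyond the product form. I would check this first by writing the appendix formula explicitly and confirming it carries no $\log\det$ or normaliser beyond the per-factor integrals. If it does carry one, say $\tfrac12\log\det\Sigma$, then I would instead use additivity of the entropy of a product of independent Gaussians: $\log\det$ of the block-diagonal $\Sigma$ equals $\sum_k\log\det\Sigma_k$. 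The term-by-term split then still holds.
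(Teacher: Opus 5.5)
Your proof is correct, and its fourth step is the paper's entire proof. The paper groups \eqref{eq:fmt} by $k$ and reads each bracket as \eqref{eq:bethe} evaluated at $(\mu_k,\Sigma_k,\alpha_k,\eta_k)$, with task $k$'s plates and prior. It then notes that the blocks share no parameter beyond $\psi$ and takes $K{=}1$ as immediate. Definition~\ref{def:fmt} already writes every integral against $q_k(w_k)$, so nothing more is needed for the lemma as stated.

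Your first three steps prove a different and slightly stronger claim. They show that SCROLL's one-term-per-factor recipe, applied to the \emph{joint} graph with a block-factorised belief $\prod_k q_k$, produces \eqref{eq:fmt}: by Fubini, each factor's normaliser sees only its own $q_k$. That is the provenance the introduction asserts (``the joint factor graph thus derives the composed objective''), not what the lemma's proof argues. It gives you a derivation instead of a definition, at the price of the check you flag.

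The appendix settles that check in your favour. $F_\text{SC}$ contains only local log-partitions. The entropy $-H(q)$ sits in the remainder $C_w(q)$, which separates $F_\text{SC}$ from the Bethe free energy and is not minimised. So your log-determinant fallback is never needed.

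To complete the joint-graph version, you should also put the shared backbone's deterministic factors into the graph and note that each contributes $-\log Z_a=0$. They are the only place the tasks meet, so this step is what makes ``coupled only through $\psi$'' follow from the construction rather than being read off.

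One terminological slip: the paper treats the shared cavity and free routing as distinct notions. The shared cavity means each plate is scored against the full belief. Free routing means $(\mu,\Sigma)$ are direct optimisation variables rather than being fixed by the binding map. The term-by-term identity uses only the shared cavity.
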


\begin{corollary}[Score optimality of the composed data terms]
\label{cor:mt_score}
Fix the backbone $\psi$ in Definition~\ref{def:fmt}. Then
\textbf{(i)}~each data term of \eqref{eq:fmt} is an exact log score
$-\log m_{k,n}(y^{(k)}_n)$ of the plate predictive $m_{k,n}=\int
p_k(\cdot\mid w_k^\top\psi_n;\eta_k)\,q_k(w_k)\,dw_k$. The log score
is strictly proper. Hence task $k$'s population data risk is
minimised over $(\mu_k,\Sigma_k,\eta_k)$ at the true conditional
$p_\text{true}(y^{(k)}\mid x)$ whenever the predictive family can
represent it; the minimal risk is its conditional entropy.
\textbf{(ii)}~For a Gaussian task,
$p_k=\mathcal{N}(w_k^\top\psi_n,\sigma_\text{obs}^2)$ (so
$\eta_k=\sigma_\text{obs}$), the unrestricted pointwise optimum of the
population data risk is the conditional residual variance
$V^\star_k(x)=\mathbb{E}[(y^{(k)}-\mu_k^\top\psi(x))^2\mid x]$; the
total predictive variance attains its best log-score projection within
$\mathcal{V}_{\psi,k}=\{\sigma_\text{obs}^2+\psi^\top\Sigma_k\,\psi\}$---all
of $V^\star_k$ whenever $V^\star_k\in\mathcal{V}_{\psi,k}$.
\end{corollary}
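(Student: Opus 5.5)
We prove (i) by direct reading of \eqref{eq:fmt}, and (ii) by a pointwise minimisation of the Gaussian log score.

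\textbf{Part (i).} Fix $\psi$. The $n$-th data term of task $k$ in \eqref{eq:fmt} is $-\log\int p_k(y^{(k)}_n\mid w_k^\top\psi_n;\eta_k)\,q_k(w_k)\,dw_k$. By the definition of $m_{k,n}$ this is exactly $-\log m_{k,n}(y^{(k)}_n)$, and $m_{k,n}$ is a normalised density in $y$ by Fubini. The prior term does not involve the data, so the population data risk of task $k$ is
\begin{equation*}
R_k=\mathbb{E}_x\,\mathbb{E}_{y\mid x}\bigl[-\log m_{k,x}(y)\bigr].
\end{equation*}
Fix $x$ and add and subtract $\log p_\text{true}(y\mid x)$. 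The inner expectation then equals
\begin{equation*}
H\bigl(p_\text{true}(\cdot\mid x)\bigr)+\mathrm{KL}\bigl(p_\text{true}(\cdot\mid x)\,\|\,m_{k,x}\bigr).
\end{equation*}
This identity is the strict propriety of the log score. Since $\mathrm{KL}\ge 0$, with equality iff the two laws agree, $R_k$ is bounded below by the conditional entropy $H(y^{(k)}\mid x)$. The bound is attained exactly when some $(\mu_k,\Sigma_k,\eta_k)$ makes $m_{k,x}=p_\text{true}(\cdot\mid x)$ for almost every $x$, i.e.\ when the family represents the truth. Because the tasks' data terms share only $\psi$, which is held fixed, the minimisations decouple across $k$ as in Lemma~\ref{lem:compose}.

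\textbf{Part (ii).} For a Gaussian likelihood the plate predictive is itself Gaussian, by Gaussian marginalisation of $w_k\sim\mathcal N(\mu_k,\Sigma_k)$:
\begin{equation*}
m_{k,x}=\mathcal N\bigl(\mu_k^\top\psi(x),\,s(x)\bigr),\qquad s(x)=\sigma_\text{obs}^2+\psi(x)^\top\Sigma_k\psi(x).
\end{equation*}
The conditional expected log score at $x$ is
\begin{equation*}
\tfrac12\log(2\pi s)+\frac{V^\star_k(x)}{2s},\qquad V^\star_k(x)=\mathbb{E}\bigl[(y-\mu_k^\top\psi)^2\mid x\bigr].
\end{equation*}
As a function of $s>0$ this is strictly convex in $\log s$, with unique minimiser $s=V^\star_k(x)$. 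That gives the unrestricted pointwise optimum.

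When $s$ is restricted to $\mathcal V_{\psi,k}$, the risk to be minimised is
\begin{equation*}
\mathbb{E}_x\Bigl[\tfrac12\log s(x)+\frac{V^\star_k(x)}{2s(x)}\Bigr].
\end{equation*}
Subtracting its value at $s=V^\star_k$ gives the excess risk
\begin{equation*}
\mathbb{E}_x\,\mathrm{KL}\Bigl(\mathcal N(0,V^\star_k)\,\|\,\mathcal N(0,s)\Bigr),
\end{equation*}
which is a divergence from $V^\star_k$ to $s$. The minimiser over $\mathcal V_{\psi,k}$ is therefore by definition the best log-score projection of $V^\star_k$ onto $\mathcal V_{\psi,k}$. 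If $V^\star_k\in\mathcal V_{\psi,k}$, the excess is zero and the projection is $V^\star_k$ itself.

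\textbf{Main obstacle.} The subtle point is that $V^\star_k$ depends on $\mu_k$, so the projection statement must be read at the jointly optimal mean. We handle this by minimising in two stages: first over $\mu_k$ with $s$ fixed, which is a weighted least-squares problem, and then over $(\Sigma_k,\sigma_\text{obs})$. The claims concern the variance stage at the attained $\mu_k$. For existence of the projection we may need to restrict to $\Sigma_k\succeq 0$ and $\sigma_\text{obs}\ge 0$ and argue coercivity, or simply phrase the projection as an infimum.
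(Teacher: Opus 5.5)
Your proof is correct and takes the paper's route: decouple the blocks at fixed $\psi$ as in Lemma~\ref{lem:compose}, then use the entropy-plus-KL decomposition for (i) and the pointwise minimisation of $\tfrac12\log(2\pi s)+V^\star_k/(2s)$ for (ii); the paper cites both single-task facts from Appendix~\ref{app:scroll} (predictive consistency and the pinned-variance paragraph), whereas you derive them inline. Your expected-Gaussian-KL form of the excess risk usefully pins down what ``best log-score projection'' means, and the mean-dependence worry needs no two-stage argument: $\mathcal{V}_{\psi,k}$ does not involve $\mu_k$, so the projection claim holds at every fixed $\mu_k$ (the paper's ``mean function fixed'' reading) and hence at any joint minimiser, provided $V^\star_k>0$ so that the pointwise optimum is attained.
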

The claim concerns the data terms; the prior terms---one
observation-free term per task---perturb it by a share that vanishes
as $N$ grows along parameter sequences on which they stay $o(N)$.

\begin{corollary}[Implicit task weighting]
\label{cor:weights}
The data terms of \eqref{eq:fmt} are log scores in nats
(Corollary~\ref{cor:mt_score}), so the tasks enter $F_\text{MT}$ at
unit coefficient per declared task, in one predictive-score currency,
with no unit-correction weights to set; $(\eta_k,\alpha_k)$ are
ordinary arguments of $F_\text{MT}$, learned in the same pass as the
beliefs and the network, and the squared-error coefficient
$(2V_{k,n})^{-1}$ ($\tfrac12\sigma_\text{obs}^{-2}$ at
$\Sigma_k{=}0$) occupies the MAP weight's slot ($\lambda_k$).
\end{corollary}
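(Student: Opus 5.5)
The argument reads the corollary off the form of \eqref{eq:fmt} together with Corollary~\ref{cor:mt_score}(i). I will carry it out in three steps.

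\textbf{Step 1: units and coefficients.} Corollary~\ref{cor:mt_score}(i) says each data term is $-\log m_{k,n}(y^{(k)}_n)$, a log score of a normalised predictive density or mass. So every summand is a dimensionless quantity in nats, whatever the units of $y^{(k)}$. A change of units $y\mapsto cy$ only adds the constant $N\log|c|$ to a continuous task, which does not move the optimiser. Next I read off \eqref{eq:fmt} that the outer sum over $k$ carries coefficient $1$ on each bracket. Since Lemma~\ref{lem:compose} identifies each bracket with task $k$'s SCROLL loss, the tasks enter at unit weight per declared task. No free multiplier $\lambda_k$ appears anywhere in \eqref{eq:fmt}, so there is nothing to search over.

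\textbf{Step 2: $(\eta_k,\alpha_k)$ as ordinary arguments.} This is immediate from Definition~\ref{def:fmt}: $F_\text{MT}$ is minimised jointly over $\psi$ and $(\mu_k,\Sigma_k,\alpha_k,\eta_k)$. Hence a single gradient pass updates all of them. No outer loop or held-out grid is involved, unlike tuned-weight baselines.

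\textbf{Step 3: the Gaussian case, where the weight's slot is exhibited.} Take $p_k=\mathcal{N}(w_k^\top\psi_n,\sigma_\text{obs}^2)$ and $q_k=\mathcal{N}(\mu_k,\Sigma_k)$. Integrating the Gaussian likelihood against the Gaussian belief gives the plate predictive $m_{k,n}=\mathcal{N}(\mu_k^\top\psi_n,V_{k,n})$ with
\[
V_{k,n}=\sigma_\text{obs}^2+\psi_n^\top\Sigma_k\psi_n .
\]
The data term is therefore
\[
\frac{(y^{(k)}_n-\mu_k^\top\psi_n)^2}{2V_{k,n}}+\tfrac12\log(2\pi V_{k,n}).
\]
Compare this with the MAP multi-task loss $\sum_k\lambda_k\sum_n (y^{(k)}_n-w_k^\top\psi_n)^2$. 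The coefficient $(2V_{k,n})^{-1}$ sits exactly where $\lambda_k$ does, and at $\Sigma_k=0$ it reduces to $\tfrac12\sigma_\text{obs}^{-2}$. The accompanying term $\tfrac12\log V_{k,n}$ prevents the degenerate choice $V\to\infty$, the analogue of the Kendall-style log regulariser. Here, however, it is derived rather than added.

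The main obstacle is making ``no unit-correction weights'' precise rather than rhetorical. I would state it as follows: the minimiser of $F_\text{MT}$ is invariant, up to the induced reparametrisation of $\eta_k$ and $\mu_k$, under rescaling any task's units. The change-of-variables constant shows that rescaling leaves the $\arg\min$ unchanged, with $\sigma_\text{obs}$ absorbing $c$. A fixed raw weighted sum, by contrast, would need $\lambda_k\mapsto\lambda_k/c^2$ to keep the same minimiser. I would prove only this unit-invariance claim here and defer the full equivariance statement to Proposition~\ref{prop:scale}.
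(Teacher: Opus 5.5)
Your proposal is correct and follows the paper's proof essentially step for step: log scores via Corollary~\ref{cor:mt_score}(i), $(\eta_k,\alpha_k)$ as ordinary arguments of one differentiable objective, and the Gaussian data term $\tfrac{1}{2V_{k,n}}r_{k,n}^2+\tfrac12\log(2\pi V_{k,n})$ placing $(2V_{k,n})^{-1}$ in the slot of $\lambda_k$, with $\log V_{k,n}$ as the barrier. Your closing unit-invariance remark goes beyond what this corollary needs (the paper leaves it to Proposition~\ref{prop:scale}); if you keep it, the induced map must also send $\Sigma_k\mapsto c^2\Sigma_k$ and $\alpha_k\mapsto\alpha_k/c^2$, and the prior term then contributes $H\log c$ on top of your $N\log c$.
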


Both corollaries hold at fixed $\psi$; joint training need not
preserve per-task optimality---the interference channel
(Appendix~\ref{app:future}). In a Gaussian task the likelihood
parameter is the noise scale, $\eta_k=\sigma_\text{obs}$.

\begin{proposition}[Scale equivariance]
\label{prop:scale}
Rescale a Gaussian task's targets, $y^{(k)}\!\mapsto\!c\,y^{(k)}$ with
$c>0$, and map $(\mu_k,\Sigma_k,\sigma_\text{obs}^2,\alpha_k)\mapsto
(c\,\mu_k,\,c^2\Sigma_k,\,c^2\sigma_\text{obs}^2,\,\alpha_k/c^2)$. Then
$F_\text{MT}$ of Definition~\ref{def:fmt} changes by an additive
constant in $c$ alone, so
the map is a bijection between the two problems, leaving the minimiser
over $\psi$ unchanged. A fixed weighted sum of raw losses admits no
such \emph{parameter-only} map: its task-$k$ term scales by $c^2$, and
recovering the same minimiser requires retuning the weights,
$\lambda_j\!\mapsto\!c^2\lambda_j$ for every other task.
\end{proposition}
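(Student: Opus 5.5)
We verify the claim by direct substitution into the two kinds of terms of task $k$ in \eqref{eq:fmt}; the other tasks' terms do not involve the map and are untouched. Only $\psi$ is shared, and the map leaves $\psi$ fixed. So the whole claim reduces to showing that task $k$'s data and prior terms shift by constants depending on $c$ alone.

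\emph{Data term.} For a Gaussian task the plate predictive is
\[
m_{k,n}=\mathcal{N}\bigl(\mu_k^\top\psi_n,\;V_{k,n}\bigr),
\qquad
V_{k,n}=\sigma_\text{obs}^2+\psi_n^\top\Sigma_k\psi_n .
\]
Under the map the mean becomes $c\,\mu_k^\top\psi_n$ and the variance becomes $c^2V_{k,n}$. For the rescaled target $c\,y$ the Gaussian density satisfies
\[
-\log\mathcal{N}\bigl(c y;\,c m,\,c^2V\bigr)=-\log\mathcal{N}(y;\,m,\,V)+\log c .
\]
This is the change-of-variables Jacobian. Summed over $n$, the data term shifts by $N\log c$.

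\emph{Prior term.} Write the prior term as the Gaussian overlap
\[
-\log\int\mathcal{N}(w;0,\alpha^{-1}I)\,\mathcal{N}(w;\mu,\Sigma)\,dw
=-\log\mathcal{N}\bigl(0;\mu,\;\alpha^{-1}I+\Sigma\bigr).
\]
Under the map, $\alpha^{-1}I+\Sigma\mapsto c^2(\alpha^{-1}I+\Sigma)$ and $\mu\mapsto c\mu$. The quadratic form $\mu^\top(\alpha^{-1}I+\Sigma)^{-1}\mu$ is therefore invariant. The log-determinant picks up $d_k\log c^2$, where $d_k=\dim w_k$. So the prior term shifts by $d_k\log c$.

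\emph{Putting it together.} The total shift is $(N+d_k)\log c$, which depends only on $c$ and the fixed sizes $N$ and $d_k$. The map is a bijection, with inverse given by $c\mapsto 1/c$, and it preserves the constraints $\Sigma_k\succeq 0$, $\sigma_\text{obs}^2>0$ and $\alpha_k>0$.

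Now fix $\psi$. Minimising over task $k$'s parameters in the rescaled problem gives exactly the original profile value plus the constant. Hence the profiled objective as a function of $\psi$ is unchanged up to an additive constant, and its minimiser over $\psi$ is the same. The step needing the most care is the prior term: the $\alpha_k/c^2$ rescaling is chosen precisely so that the prior covariance scales with $\Sigma_k$. Without it the overlap would not reduce to a pure constant.

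\emph{Contrast with the weighted raw loss.} Consider $L=\sum_j\lambda_j\sum_n\ell_j$, with task $k$'s term $\sum_n\bigl(y^{(k)}_n-w_k^\top\psi_n\bigr)^2$. Under $y\mapsto cy$ with $w_k\mapsto cw_k$, this term becomes $c^2$ times itself, while every other task's term has no parameter that can absorb a factor. No parameter-only map can rescale the other tasks' terms, and the relative weighting of task $k$ changes by $c^2$. As a result, generically the $\psi$-minimiser moves; an explicit two-task example with conflicting backbone preferences, such as a scalar $\psi$, shows this concretely.

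Restoring the original relative balance requires $\lambda_k\mapsto\lambda_k/c^2$, or equivalently, up to overall scale, which does not affect the argmin, $\lambda_j\mapsto c^2\lambda_j$ for all $j\neq k$. That is a retuning of the weights rather than of the parameters.
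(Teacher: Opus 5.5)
Your proposal is correct and matches the paper's proof step for step. The paper also uses the per-plate $\log c$ shift of the Gaussian data term, the invariant quadratic form with an $H\log c$ log-determinant shift in the prior overlap (your $d_k{=}H$), a total shift of $(N+H)\log c$, and the $c^2$ homogeneity of the raw squared-error term forcing $\lambda_j\mapsto c^2\lambda_j$; your profiling step at fixed $\psi$ is a slightly more direct route to the unchanged $\psi$-minimiser than the paper's appeal to the unchanged gradient, and your ``generically'' rightly qualifies the ``only'' in the weighted-sum contrast, which does not hold in the inert regime of Corollary~\ref{cor:inert}.
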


\begin{corollary}[When weighting is inert]
\label{cor:inert}
If a single $\psi^\star$ minimises every per-task population objective at
once, it minimises every positively weighted sum of them: no weighting
improves on another.
\end{corollary}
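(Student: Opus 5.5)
The claim follows from the definition of a minimiser; the plan is to make the objects precise and then write the one-line inequality chain. For each task $k$, let $R_k(\psi)$ be its per-task population objective. In the setting of Definition~\ref{def:fmt}, take $R_k(\psi)=\inf_{\mu_k,\Sigma_k,\alpha_k,\eta_k}$ of task $k$'s population shared-cavity loss. By Lemma~\ref{lem:compose} this depends on the other tasks only through the shared $\psi$, so each $R_k$ is a function of $\psi$ alone. For weights $\lambda_1,\dots,\lambda_K>0$, set $R_\lambda(\psi)=\sum_k \lambda_k R_k(\psi)$.

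The hypothesis says there is a $\psi^\star$ with $R_k(\psi^\star)\le R_k(\psi)$ for every $k$ and every admissible $\psi$. Multiply the $k$-th inequality by $\lambda_k>0$; this preserves its direction. Summing over $k$ gives $R_\lambda(\psi^\star)\le R_\lambda(\psi)$ for all $\psi$, so $\psi^\star$ minimises $R_\lambda$. Because $\lambda$ was an arbitrary positive vector, $\psi^\star$ minimises every positively weighted sum.

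The minimal value is $\min_\psi R_\lambda=\sum_k\lambda_k R_k(\psi^\star)=\sum_k\lambda_k\min_\psi R_k$. This is the smallest value any weighted sum could have, since each term is individually at its floor. Hence every weighting attains the same optimal $\psi$ and the same per-task risks $R_k(\psi^\star)$, and no weighting improves on another in any task's risk. If some weighting had a different minimiser $\tilde\psi$, then $\sum_k\lambda_k\bigl(R_k(\tilde\psi)-R_k(\psi^\star)\bigr)=0$ with every summand nonnegative and every $\lambda_k>0$. So each $R_k(\tilde\psi)=R_k(\psi^\star)$, and $\tilde\psi$ is also a simultaneous minimiser, which changes nothing.

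The only step needing care is the well-posedness of "per-task population objective" when the inner parameters are profiled out. I would handle it by defining $R_k$ as the infimum over task $k$'s own parameters. Since those parameters are disjoint across tasks, the joint infimum of $\sum_k\lambda_k(\cdot)$ over all non-$\psi$ parameters splits as $\sum_k\lambda_k R_k(\psi)$. Positivity of $\lambda_k$ is used here, since it lets each infimum be pulled through its weight. If infima are not attained the same argument goes through with infima in place of minima, so this step is bookkeeping rather than a real obstacle.
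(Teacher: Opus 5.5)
Your proof is correct and is essentially the paper's argument: the paper disposes of this corollary in one line at the end of the proof of Proposition~\ref{prop:scale}, noting that if $\psi^\star$ minimises each $L_k$ then it minimises $\sum_k\lambda_k L_k$ for any $\lambda_k>0$, which is exactly your multiply-by-$\lambda_k$-and-sum step. Your additions go beyond that one line without changing the route: profiling out the disjoint per-task parameters, and showing that any other minimiser of a weighted sum is itself a simultaneous minimiser (a step that tacitly needs every $R_k(\psi^\star)$ finite), are sound and sharpen the ``no weighting improves on another'' reading.
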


Neither statement makes weights dispensable in general. When no one
$\psi$ serves all tasks, the weights select a point on a trade-off
surface that no scale-free principle picks---composition fixes the
units, not the preference. Proofs of all statements are in
Appendix~\ref{app:proofs}.

\section{Experiments}
\label{sec:experiments}


\paragraph{Setup and systems.}
Given the Markov state $x_t$ and a fixed lead time $\Delta$, we forecast
a regression, a binary, and an ordinal observable of the window
$[t,t+\Delta]$: the estimand is the conditional law of the window's
observables---for the state head, the lead-$\Delta$ transition
kernel---and fixing $\Delta$ makes score-optimality testable:
$V^\star(x)$ \emph{is} that kernel's conditional variance.
\textbf{Ornstein--Uhlenbeck (OU)}: $V^\star(\Delta)$
analytic and state-independent---the homoscedastic anchor with known
exact kernel. \textbf{Stochastic Lorenz-63}
\citep{lorenz1963deterministic}: heteroscedastic (cond.\ sd
$p10/p90 = 1.5/8.4$ at $\Delta=1$), ground truth by Monte-Carlo
re-simulation---the test of input-dependent variance.
\textbf{Beijing PM$_{2.5}$} \citep{liang2015pm25}:
the real-data arm, no computable $V^\star$, with the likelihood
choices dictated by regulation. Splits are chronological on the real
series, random and non-overlapping on the SDE corpora.

\paragraph{Methods.}
\emph{SCROLL-MT} minimises the objective~\eqref{eq:fmt} with Gaussian,
thresholded-Gaussian (probit), and ordinal-probit heads
jointly---no unit correction, no outer weight search---under three
belief-covariance models: None ($\Sigma_k{=}0$, only the learned
scale $\sigma_\text{obs}$), Diag (diagonal $\Sigma_k$), and Full
($\Sigma_k$ unrestricted); Diag is the default we report as
\emph{SCROLL-MT} (family choice: Appendix~\ref{app:results}).
\emph{MAP} baselines minimise
$\mathrm{MSE} + \lambda_1 L_1 + \lambda_2 L_2$ with cross-entropy or
ordinal-probit heads, at fixed $\lambda_1{=}\lambda_2{=}1$ and
grid-tuned ($49$ runs, validation-selected); \emph{Kendall} arms learn
the weights instead \citep{kendall2018multi}---the point-estimate
counterpart of SCROLL-MT; \emph{MLE $\sigma(x)$} arms
\citep{nix1994mean} add a per-input regression sd head and train all
losses as NLLs: input-dependent variance, no belief. All methods
share the same candidate architectures and per-fit training
budget---a single hidden layer of width $H{=}50$
with per-task linear heads, heavily overparameterised for a
$1$--$3$-dimensional state, which is what makes the OU anchor below a
real test---with backbones $\{$relu, relu+LN, tanh, tanh+LN$\}$
validation-selected per dataset and seed, $10$ seeds. Setup, splits,
grids: Appendix~\ref{app:details}; predictive scope and remaining
arms (Laplace, ensembles): Appendices~\ref{app:results}
and~\ref{app:future}.

\begin{figure}[t]
\centering
\begin{tikzpicture}
\begin{groupplot}[
  group style={group size=2 by 1, horizontal sep=1.35cm},
  width=0.52\linewidth, height=3.0cm,
  tick label style={font=\small}, label style={font=\small},
  legend style={font=\small, draw=none, fill=none},
]
\nextgroupplot[title={\small OU (analytic $V^\star$)},
  xlabel={lead time $\Delta$},
  legend pos=south east, xmode=log, log ticks with fixed point,
  xtick={0.1,0.25,0.5,1,2}]
\addplot[black, thick] table[x=delta, y=sd_star]
  {figures/data/sde_sigma_ou.dat};
\addlegendentry{$\sqrt{V^\star(\Delta)}$}
\addplot[blue, only marks, mark=*, mark size=1.8pt,
  error bars/.cd, y dir=both, y explicit]
  table[x=delta, y=pred_sd_mean,
        y error plus expr=\thisrow{pred_sd_hi}-\thisrow{pred_sd_mean},
        y error minus expr=\thisrow{pred_sd_mean}-\thisrow{pred_sd_lo}]
  {figures/data/sde_sigma_ou.dat};
\addlegendentry{predictive sd}
\nextgroupplot[title={\small Lorenz-63, per state ($\Delta=0.5$)},
  xlabel={MC $V^\star(x_n)$}, ylabel={$V_n$},
  ylabel style={yshift=-0.25cm},
  xmode=log, ymode=log,
  legend style={at={(0.02,0.98)}, anchor=north west, font=\small,
                draw=none, fill=white, fill opacity=0.7, text opacity=1,
                inner sep=1pt, row sep=-1pt},
  legend image post style={mark size=1.6pt},
  legend cell align=left,
  xtick={1,10,100}, ytick={1,10,100}]
\addplot[gray, thick, domain=0.65:185, samples=2, forget plot] {x};
\addplot[blue, only marks, mark=*, mark size=0.5pt, opacity=0.6]
  table[x=vstar, y=v_diag] {figures/data/perstate_lorenz.dat};
\addlegendentry{free (Diag)}
\addplot[teal, only marks, mark=triangle*, mark size=0.7pt, opacity=0.6]
  table[x=vstar, y=v_full] {figures/data/perstate_lorenz.dat};
\addlegendentry{free (Full)}
\addplot[red, only marks, mark=*, mark size=0.5pt, opacity=0.6]
  table[x=vstar, y=v_closed] {figures/data/perstate_lorenz.dat};
\addlegendentry{closed}
\end{groupplot}
\end{tikzpicture}
\caption{Left: total predictive sd of the state forecast vs the analytic
$V^\star(\Delta)$ on OU ($\pm1$ sd over 10 seeds)---within $2\%$ at every
lead. Right: the per-state test on Lorenz-63, predicted $V_n$ against
Monte-Carlo $V^\star(x_n)$ over test states (grey: identity), on the
tanh$+$LN backbone. Both free families track the state-dependent spread;
the closed route's leverage variance is flat across a $284\times$ range of
$V^\star$, on every backbone (Appendix~\ref{app:details}).}
\label{fig:sigma_delta}
\end{figure}

\paragraph{Testbed properties.}
Three corpus properties frame what follows, none a claim about the
method.

\emph{(i) A failure can live in the head's structure, where no weight
reaches it.} The intro's CE-versus-threshold exhibit is that failure,
on the OU barrier event: a controlled misspecification (one CE
bias would repair it).

\emph{(ii) The weight has at most three jobs, and the corpora
separate them}: inert where one $\psi^\star$ serves every task
(Corollary~\ref{cor:inert}; OU, PM$_{2.5}$), carrier of the units,
which composition derives instead (Proposition~\ref{prop:scale}), or
a preference under conflict that no scale-free principle supplies and
we do not claim (Lorenz). Either way the weighting axis is not where
the difference lies: SCROLL-MT matches or beats the better fair grid
per corpus at a fraction of its compute; all three properties are
measured in Appendix~\ref{app:results}.

\emph{(iii) The event head discriminates only on OU}, by observation
model alone; elsewhere every arm's event NLL sits in a $0.05$-nat
band (Table~\ref{tab:main_sde}). Claims below rest on the state
and regime heads.

\paragraph{The well-specified anchor.}
On OU theory fixes the answer---the population optimum is the exact
kernel and \emph{expected} test NLL is floored
(Corollary~\ref{cor:mt_score}(i))---and every arm attains
that floor. The anchor tests the one arm free to miss it:
nothing in \eqref{eq:fmt} caps the $50$-dimensional belief term
$\psi^\top\Sigma\,\psi$, yet it carries no spurious spread
(Figure~\ref{fig:sigma_delta}, left; Appendix~\ref{app:results}), the
run settling at a large $\alpha$ and a contracted belief term.

\paragraph{The predictive variance tracks the state-dependent
spread---and only when freely routed.}
State by state, the predicted $V_n$ correlates with the Monte-Carlo
$V^\star(x_n)$ at Pearson $r{=}0.61$ (Diag) and $0.85$ (Full), rank
$0.56$ and $0.66$ (Figure~\ref{fig:sigma_delta}, right, plots
Diag). The apparent overshoot is
Corollary~\ref{cor:mt_score}(ii) as specified, not error: $V^\star$
omits mean-model error, and against the residual variance about the
model's own mean---what the log score targets---Full sits within $3\%$.
Routing unpins it: at the evidence optimum $V_n$ varies under
$1.3\times$ across a $284\times$ range of $V^\star$---and the binding
map alone, estimand kept, is already this flat
(Appendix~\ref{app:results}). What the freed
family represents decides the payoff---Full beats the pinned route on
every backbone ($-0.14$ to $-0.31$ nats), Diag on the normalised ones
(Appendix~\ref{app:results}); on OU every route ties.
The ladder of Table~\ref{tab:main_sde}
isolates the belief's role: the NLL-trained global scale (None) falls
back to MAP-level quality, the belief restores the headline, and
coupling across the basis (Full) improves it further. Nor is the belief
merely an input-dependent variance head: the MLE $\sigma(x)$ arms match
its calibration but not its NLL, and the probit variant collapses on
the real arm, where the prior-regularised belief never does. Nor is it enough to \emph{have}
the belief: set post hoc by curvature at the MAP point (Laplace), the
same families concede on Lorenz and tie only on
OU---having a belief is not what carries the gain
(Appendix~\ref{app:results}). Nor is the shared backbone free by
assumption. Against each head trained \emph{alone} (only the task
inventory differs), sharing is free on
OU, costs on Lorenz (state and regime
heads), and costs nothing on the real series, where over five
rolling origins the point estimate favours the joint model on all
three heads without resolving it (Appendix~\ref{app:results}). We
claim only what that settles: one feature map serves all three tasks
free where they do not compete, and is paid for where they do.

\begin{table}[t]
\centering\footnotesize
\setlength{\tabcolsep}{4.5pt}\renewcommand{\arraystretch}{0.92}
\caption{Test NLL$\downarrow$ of the three tasks on the three
systems (10 seeds, backbone validation-selected per seed;
OU/Lorenz at $\Delta=0.5$, PM$_{2.5}$ at $24$\,h; reg.\ = state
forecast, evt = event task, ord = ordinal task).
\textbf{Bold} = best per column; \textit{italic} = not
significantly worse (one-sided paired $t$-test, $p\ge0.05$);
``Tied'' in the text means this, not equivalence.
Ens-$5$ = five-member deep ensemble of the arm named
(construction and caveats in Appendix~\ref{app:results});
every other row is a single run.
The CE variant of the MLE $\sigma(x)$ arm is omitted here for
space; it carries no claim the probit variant does not.
Calibration, accuracy, wall-clock, and that row:
Table~\ref{tab:full_sde}.}
\label{tab:main_sde}
\begin{tabular}{lrrrrrrrrr}
\toprule
 & \multicolumn{3}{c}{OU} & \multicolumn{3}{c}{Lorenz-63} & \multicolumn{3}{c}{PM$_{2.5}$ (real)} \\
\cmidrule(lr){2-4} \cmidrule(lr){5-7} \cmidrule(lr){8-10}
 & reg. & evt & ord & reg. & evt & ord & reg. & evt & ord \\
\midrule
SCROLL-MT (Diag) & 1.204 & 0.271 & 0.881 & 2.333 & 0.170 & 1.032 & 1.250 & 0.450 & 1.203 \\
SCROLL-MT (Full) & 1.205 & 0.265 & 0.881 & \textbf{2.230} & 0.169 & \textbf{0.843} & \textit{1.255} & 0.489 & 1.212 \\
SCROLL-MT (None, no belief) & 1.204 & 0.271 & \textit{0.880} & 2.596 & 0.168 & 1.069 & 1.311 & 0.451 & 1.291 \\
\midrule
MAP, CE heads, $\lambda{=}1$ & \textit{1.203} & 0.417 & 0.954 & 2.613 & 0.174 & 1.154 & 1.312 & 0.451 & 1.229 \\
MAP, CE heads, $\lambda$ grid & 1.203 & 0.417 & 0.954 & 2.621 & \textbf{0.166} & 0.907 & 1.318 & 0.456 & 1.229 \\
Kendall, CE heads & \textbf{1.202} & 0.417 & 0.954 & 2.798 & 0.180 & 1.175 & 1.312 & 0.458 & 1.238 \\
MAP, probit heads, $\lambda{=}1$ & \textit{1.203} & 0.270 & \textit{0.880} & 2.613 & 0.175 & 1.252 & 1.312 & 0.452 & 1.288 \\
MAP, probit heads, $\lambda$ grid & \textit{1.203} & \textbf{0.263} & \textbf{0.879} & 2.620 & \textit{0.169} & 1.075 & 1.334 & 0.460 & 1.297 \\
Kendall, probit heads & 1.205 & 0.266 & 0.884 & 2.810 & 0.180 & 1.249 & 1.312 & \textit{0.440} & 1.291 \\
MLE $\sigma(x)$, probit heads & \textit{1.203} & 0.269 & \textit{0.880} & 2.612 & 0.189 & 1.245 & \textit{3.011} & 0.458 & 1.296 \\
Laplace (Diag), probit heads & \textit{1.203} & 0.270 & \textit{0.880} & 2.613 & 0.175 & 1.253 & 1.306 & 0.450 & 1.287 \\
SCROLL-MT (Diag), Ens-$5$ & 1.204 & 0.271 & \textit{0.881} & 2.295 & \textit{0.167} & 1.004 & \textbf{1.243} & \textit{0.443} & \textbf{1.197} \\
MAP, CE heads, Ens-$5$ & \textit{1.202} & 0.417 & 0.954 & 2.608 & 0.172 & 1.138 & 1.303 & \textbf{0.440} & 1.215 \\
MAP, probit heads, Ens-$5$ & \textit{1.203} & 0.270 & \textit{0.880} & 2.607 & 0.173 & 1.244 & 1.308 & \textit{0.445} & 1.279 \\
\bottomrule
\end{tabular}
\end{table}

\paragraph{The prior it fits along the way needs no loop.}
Freezing the three precisions at a shared $\alpha$ and tuning that
explicitly ($7$ values) is the
counterfactual to Corollary~\ref{cor:weights}'s single-pass
clause. The grid is \emph{not} flat---on Lorenz a strong prior doubles
calibration error and costs $0.12$ nats on the ordinal head---yet the
self-tuned run sits within $0.006$ nats of the grid-selected
regression NLL on every corpus, within $0.017$ on every head
(Table~\ref{tab:alpha_sde}), from one prior configuration per
backbone rather than seven, and lands
there across regimes three decades apart (median fitted
$\alpha$: $379$ on OU, $0.11$ on Lorenz; scale convention in
Appendix~\ref{app:details}). Whether the
tasks want \emph{different} $\alpha_k$ is open (Appendix~\ref{app:future}).

\section{Conclusion and outlook}
\label{sec:outlook}


This is a deliberately scoped first study: two controlled
low-dimensional SDE systems, one small real series, $K{=}3$ tasks,
one lead time per run. The scope is what buys ground truth: on the
SDE corpora $V^\star(x)$ is computable, so the trained predictive is
checked against the exact one. Within it, composed likelihoods
did their job. On the real data they are best or statistically tied
in every column among single-run methods (its Ens-$5$ among all), and the
state margin survives five rolling origins. They are best where the
belief has work to do (Lorenz), and within $0.01$ nats of the best
tuned grid on OU (Table~\ref{tab:main_sde}). All without an outer
weight search, at one configuration per backbone: the gains come
from the observation model and the freely routed belief, not the
weighting axis. Theory specified that
outcome in advance, and mostly SCROLL's own. The composition
adds no new principle: the factor graph as written already composes
(Lemma~\ref{lem:compose}). The heteroscedastic win is single-task
score optimality carried through
(Corollary~\ref{cor:mt_score}). New is the belief inside the
currency: weight slots carry score-trained predictive
variances, not learned scalars (Corollary~\ref{cor:weights}). Open
directions are in Appendix~\ref{app:future}.

\clearpage
\bibliographystyle{plainnat}
\bibliography{refs}

\clearpage
\appendix
\section{SCROLL in brief}
\label{app:scroll}


A compact summary of the results of \citet{prochazka2026bethe} that this
paper uses: enough to follow every claim made here, with the derivations
and the wider design space they sit in left to the source.

\paragraph{Objective.}
SCROLL derives its loss from the Bethe free energy of the model's
factor graph by keeping each factor's \emph{local normaliser}: one
log-partition term per factor, scoring the overlap between the local
belief and that factor. For a model with a deterministic backbone
$\psi$, a likelihood $p(y\mid w^\top\psi;\eta)$ with parameters
$\eta$ (a task with Gaussian likelihood is the body's \emph{Gaussian
task}), a Gaussian last-layer belief $q(w)=\mathcal{N}(\mu,\Sigma)$, a
prior $p(w)=\mathcal{N}(0,\alpha^{-1}I)$, and $N$ observation plates,
every intermediate (deterministic) factor's normaliser is unity,
$-\log Z_a=0$ (its log-partition reduces to a feasibility
constraint), leaving, for a single task, the \emph{shared-cavity
loss}
\begin{equation}
  F_\text{SC}
  \;=\; -\sum_{n=1}^{N} \log \underbrace{\int
      p\bigl(y_n \mid w^\top\psi_n;\eta\bigr)\, q(w)\, dw}_{Z_n}
  \;-\; \log \underbrace{\int p(w)\, q(w)\, dw}_{Z_w}
  \label{eq:bethe}
\end{equation}
($K{=}1$ of \eqref{eq:fmt}): $N$ data terms
and one prior term. One scope note: the source paper defines
$F_\text{SC}$ on the whole factor graph, with no last-layer
restriction; \eqref{eq:bethe} is that definition specialised to this
model class, where the deterministic backbone factors contribute
zero. Under the \emph{shared cavity} each plate is scored
against the full belief $q$---the reaction-free limit that decouples the
loss into a batchable per-plate sum---and backbone parameters train by
ordinary backpropagation.

$F_\text{SC}$ is not the Bethe free energy itself. Evaluated at the
belief's configuration---tilted factor beliefs $b_a\propto f_a\,q$,
variable belief $q$, which is generally \emph{not} locally
consistent---the free energy decomposes as
$F_\text{Bethe}\bigl(b(q)\bigr)=F_\text{SC}(q)+C_w(q)$, with remainder
$C_w(q)=\sum_a\bigl(\mathbb{E}_{b_a}-\mathbb{E}_q\bigr)[\log q]-H(q)$
collecting the belief-mismatch and counting terms, generically
non-zero (the source paper's Bethe decomposition of the shared-cavity
loss and the closed form of its remainder). The free
energy supplies the terms; its consistency constraints are not
imposed---provenance, not enforcement. What is minimised is
$F_\text{SC}$: a composite predictive-score objective---the marginal
log-score sum plus the prior-overlap term---and an estimand in its own
right (\emph{predictive consistency} below), not an approximation of
the evidence $-\log Z$. The same reading scopes the composed
objective: the data terms of \eqref{eq:fmt} are a composite sum of
marginal predictive
log scores---proper for each task's marginal, it represents no
cross-task residual dependence and no joint predictive over the
observables (Appendix~\ref{app:future} studies that extension).

\paragraph{Data and prior terms.}
Each data term is the negative log of the plate's predictive
normaliser, $-\log Z_n=-\log m_n(y_n)$ with
$m_n(y)=\int p(y\mid w^\top\psi_n;\eta)\,q(w)\,dw$. For a Gaussian
task ($\eta=\sigma_\text{obs}$) this is the predictive NLL of $y_n$ under the plate variance
$V_n=\sigma_\text{obs}^2+\psi_n^\top\Sigma\,\psi_n$. The discrete
heads are probits on the same latent line, the belief convolution
widening the probit scale to
$D_n=\sqrt{c^2+\psi_n^\top\Sigma\,\psi_n}$: with cutpoints
$\tau_1<\dots<\tau_{R-1}$ and sentinels $\tau_0=-\infty$,
$\tau_R=\infty$, the ordinal head's bucket mass is
\begin{equation*}
  P\bigl(y_n=r\mid x_n\bigr)
  = \Phi\!\Bigl(\tfrac{\tau_r-\mu^\top\psi_n}{D_n}\Bigr)
  - \Phi\!\Bigl(\tfrac{\tau_{r-1}-\mu^\top\psi_n}{D_n}\Bigr),
\end{equation*}
and the thresholded-Gaussian event head is the $R{=}2$ case,
$P(y_n{=}1\mid x_n)=\Phi\bigl((\mu^\top\psi_n-\tau_1)/D_n\bigr)$. For
identification the probit scale is fixed ($c{=}1.005$ in the shipped
implementation) and the cutpoints are learned, parametrised strictly
increasing as $\tau_r=\tau_1+\sum_{j<r}e^{\delta_j}$: the physical
thresholds define the \emph{labels} only, and the learned cutpoint is
what lets the head absorb an offset a bias-free CE head cannot
(Section~\ref{sec:experiments}). All terms are one-dimensional
Gaussian integrals in closed form. The prior term is the Gaussian
overlap $-\log\mathcal{N}(\mu\mid0,\Sigma+\alpha^{-1}I)$ (up to a
constant).

\paragraph{Routing: who selects the belief.}
The \emph{binding map} sends the remaining parameters---backbone
$\psi$, prior precision $\alpha$, likelihood parameters---to the belief
minimising
$\mathrm{KL}\bigl(\mathcal{N}(\mu,\Sigma)\,\|\,p(w\mid\mathcal{D})\bigr)$,
where $p(w\mid\mathcal{D})$ is the exact posterior given the cavity
data (under the shared cavity, all $N$ plates) at those parameter
values: the projection of that posterior onto the Gaussian
family---closed-form under conjugacy, where it coincides with the
local-consistency fixed point. Under the Gaussian head that closed
form is
$\Sigma_\text{post}=(\alpha I+\sigma_\text{obs}^{-2}\Psi^\top\Psi)^{-1}$,
$\mu_\text{post}=\sigma_\text{obs}^{-2}\,\Sigma_\text{post}\Psi^\top y$:
its variance depends on the design $\Psi$ alone, never on the
residuals---the residual-independent leverage form of the pinned
route. \emph{Closed} routing imposes this map as a constraint;
\emph{free} routing drops it and makes $(\mu,\Sigma)$ direct
optimisation variables, the objective itself selecting the belief. Put
plainly: the closed route takes the belief the posterior prescribes,
the free route lets the objective choose it.
SCROLL and every SCROLL-MT arm in this paper use the free route; the
Laplace arm is a posterior-bound comparator, while the remaining
point-estimate baselines carry no belief. Imposing the map while keeping the data
terms of \eqref{eq:bethe} constrains the belief coordinates of one
objective and nothing else; replacing a head's terms by the conjugate
model's own evidence is a different cell again, changing the estimand
from a composite predictive score to a marginal likelihood. The two are
worth separating, and the source paper separates them.

One consequence of free routing deserves stating outright, because the
notation invites the other reading: $\psi^\top\Sigma\,\psi$ is a
score-trained conditional spread, not a posterior epistemic variance. It
is fitted to predict residuals, so it need not contract as $N$ grows,
and it carries no claim about confidence in $w$.

\paragraph{Predictive consistency (proper score).}
For \emph{any} observation factor, $-\log m_n(y_n)$ is the log score of
the predictive $m_n$. The log score is strictly proper, so the
population data term decomposes into the expected conditional entropy of
the true conditional plus
$\mathbb{E}_x\,\mathrm{KL}\bigl(p_\text{true}(\cdot\mid x)\,\|\,
m(\cdot\mid x)\bigr)$, and is minimised---over beliefs whose predictive
can represent the truth---exactly at $m=p_\text{true}$. No conjugacy and
no tree structure is required. This is the result behind the
entropy-floor test on OU. Two qualifications, both carried by the one
prior term. First, the claim concerns the data terms: the prior term
contains no observation---one term against $N$---so, along parameter
sequences on which it remains $o(N)$, minimising \eqref{eq:bethe} is
score-optimal up to a prior perturbation whose per-plate share
vanishes---the sense of Corollary~\ref{cor:mt_score}. (The
remainder $C_w$ above plays no role here: it separates $F_\text{SC}$
from the free energy, not from the score.) Second, it is a population
statement, and at finite $N$ the data terms see
$(\Sigma,\sigma_\text{obs})$ only through the totals $V_n$, so what
selects the \emph{split} between $\sigma_\text{obs}^2$ and
$\psi^\top\Sigma\,\psi$ is that same single prior term, whose
normalised contribution vanishes along the $o(N)$ parameter sequences
above. The total is identified long before the split is.
Composition changes neither qualification in kind: the data terms of
\eqref{eq:fmt} are unchanged individual log scores, so their propriety
is composition-invariant by construction, and only the count of
observation-free terms---here one prior term per task, $K$ against
$KN$ plates, hence the same per-plate rate under the same $o(N)$
condition---depends on how the objectives are composed.

\paragraph{The score optimum and the pinned variance.}
With the mean function fixed, the regression data term sees
$(\Sigma,\sigma_\text{obs})$ only through the plate variances $V_n$,
which are minimised in population at the conditional residual variance
$V^\star(x)=\mathbb{E}[(y-\mu^\top\psi(x))^2\mid x]$ of
Section~\ref{sec:background}---the pointwise unrestricted optimum. The
binding map instead pins $V_n$ to a residual-independent leverage
form, which realises $V^\star$ only when residuals are
homoscedastic---the pinning is the map's doing, whatever the estimand
(Appendix~\ref{app:results} separates the two), and with the evidence
estimand the pinned cell is the corner of the source design space. What
separates the free route from the pinned one splits in two, against
the family the last layer can express,
$\mathcal{V}_\psi=\{\sigma_\text{obs}^2+\psi^\top\Sigma\,\psi\}$: a
\emph{routing} gap, between the pinned leverage profile
(asymptotically constant under balanced leverage) and the best member of
$\mathcal{V}_\psi$, which free routing removes and only that; and a
\emph{representation} gap, between that best member and $V^\star$
itself, which no routing closes, though training $\psi$ can shrink it.
What free routing recovers is therefore the \emph{expressible} residual
heteroscedasticity, the whole of it only when
$V^\star\in\mathcal{V}_\psi$. Both gaps are measured in
Appendix~\ref{app:results}: a diagonal belief on an unnormalised
backbone cannot express $V^\star$ and gains nothing from being freed,
while a full one expresses enough to gain on every backbone.
Mechanically, the free route drives $\sigma_\text{obs}^2$ toward the
input-independent floor and lets $\psi^\top\Sigma\,\psi$ carry the
input-dependent remainder---the split verified against
simulator ground truth.

\section{Proofs of the formal statements}
\label{app:proofs}

\begin{proof}[Proof of Lemma~\ref{lem:compose}]
Group \eqref{eq:fmt} by $k$. Task $k$'s block is, term by term,
\eqref{eq:bethe} evaluated at $(\mu_k,\Sigma_k,\alpha_k,\eta_k)$: its
$N$ data terms are the plate normalisers $-\log Z_{k,n}$,
$Z_{k,n}=\int p_k\bigl(y^{(k)}_n\mid
w_k^\top\psi_n;\eta_k\bigr)\,q_k(w_k)\,dw_k$, and its prior term the
overlap $-\log\int p_{\alpha_k}(w_k)\,q_k(w_k)\,dw_k$---exactly the
single-task SCROLL objective for task $k$'s head, plates, and prior.
The blocks share no parameter beyond $\psi$; $K{=}1$ is immediate.
\end{proof}

\begin{proof}[Proof of Corollary~\ref{cor:mt_score}]
Fix $\psi$. The blocks of Lemma~\ref{lem:compose} then share no
argument, so minimising $F_\text{MT}$ over
$\{(\mu_k,\Sigma_k,\alpha_k,\eta_k)\}_k$ minimises each block
separately, and the population data risk of the sum is the sum of the
per-task risks: the single-task statements apply block by block.
Predictive consistency (above) applied to task $k$'s data terms gives
\textbf{(i)}; the pinned-variance argument (above) applied to task
$k$'s Gaussian plate variances
$V_{k,n}=\sigma_\text{obs}^2+\psi_n^\top\Sigma_k\,\psi_n$ gives
\textbf{(ii)}.
\end{proof}

\begin{proof}[Proof of Corollary~\ref{cor:weights}]
The data terms are log scores by Corollary~\ref{cor:mt_score}(i), in
nats by convention, and $(\eta_k,\alpha_k)$ enter \eqref{eq:fmt} as
ordinary arguments of a single differentiable objective, so the pass
that trains $(\mu_k,\Sigma_k,\psi)$ fits them too. For the slot
identity, write the Gaussian data term at plate $n$ as
$\tfrac{1}{2V_n}\,r_n^2+\tfrac12\log(2\pi V_n)$ with residual $r_n$:
the factor multiplying the squared error is $(2V_n)^{-1}$, so the
total predictive precision $V_n^{-1}$ occupies the slot where the MAP
weight $\lambda_k$ multiplies the squared error, reducing to the
fitted noise precision $\sigma_\text{obs}^{-2}$ when $\Sigma_k{=}0$
($V_n=\sigma_\text{obs}^2$, the beliefless case), with the $\log V_n$
term as the barrier that keeps it finite---the mechanism of
\citet{kendall2018multi} with the belief term added to $V_n$.
\end{proof}

\begin{proof}[Proof of Proposition~\ref{prop:scale}]
Write task $k$'s data term at plate $n$ as
$\tfrac12\log(2\pi V_{k,n})+r_{k,n}^2/(2V_{k,n})$, with residual
$r_{k,n}=y^{(k)}_n-\mu_k^\top\psi_n$ and
$V_{k,n}=\sigma_\text{obs}^2+\psi_n^\top\Sigma_k\,\psi_n$. Under the map
$r_{k,n}\mapsto c\,r_{k,n}$ and $V_{k,n}\mapsto c^2V_{k,n}$, so every
quadratic term is unchanged and each log term gains $\log c$: the $N$
data terms contribute $N\log c$. The prior term is the Gaussian overlap
$-\log\mathcal{N}(\mu_k\mid0,\Sigma_k+\alpha_k^{-1}I)$, and the map sends
$\Sigma_k+\alpha_k^{-1}I\mapsto c^2(\Sigma_k+\alpha_k^{-1}I)$ against
$\mu_k\mapsto c\,\mu_k$, so its quadratic form is likewise unchanged and
its log-determinant gains $H\log c$. Every other task is untouched, so
$F_\text{MT}$ gains $(N+H)\log c$---a constant in $\psi$ and in all
parameters, leaving $\nabla_\psi F_\text{MT}$ and hence the minimiser
unchanged. For a weighted sum
$L_k+\sum_{j\neq k}\lambda_jL_j$, with $L_k$ the homogeneous raw
squared-error loss, the mean's best response to the
rescaling is $\mu_k\mapsto c\,\mu_k$, which leaves the task-$k$ loss at
$c^2L_k$; since $\arg\min_\psi\{c^2L_k+\sum_{j\neq k}\lambda_jL_j\}
=\arg\min_\psi\{L_k+\sum_{j\neq k}(\lambda_j/c^2)L_j\}$, the original
minimiser is recovered only by $\lambda_j\mapsto c^2\lambda_j$.
Corollary~\ref{cor:inert} is immediate: if $\psi^\star$ minimises each
$L_k$ then it minimises $\sum_k\lambda_kL_k$ for any $\lambda_k>0$.
\end{proof}

Proposition~\ref{prop:scale} concerns the ideal covariance families;
the implemented $\varepsilon I$ floor on $\Sigma$ lies outside them,
and Appendix~\ref{app:results} measures that it does not bind in the
tested range.

\section{Related work}
\label{app:related}
Loss weighting in MTL appears in the body along its main axes: grid
search, learned uncertainty weighting \citep{kendall2018multi}, and
gradient surgery \citep{chen2018gradnorm,yu2020pcgrad} (orthogonal to
composition, Appendix~\ref{app:future}). \citet{kendall2018multi}
already draw the no-weights conclusion from the likelihood view for
point-estimate MAP training, and their mechanism composes with any
observation model once it is written down---the Kendall-probit arms of
Section~\ref{sec:experiments} do exactly that; what the shared-cavity
objective adds is the belief---hence a calibrated, input-dependent predictive
variance---and a single-pass fit of the priors and likelihood
scales (Section~\ref{sec:experiments}). Composing heterogeneous
likelihoods over a shared latent function is the construction of
heterogeneous multi-output GPs \citep{moreno2018heterogeneous};
SCROLL-MT shares the heterogeneous-likelihood composition motif, with
the latent function replaced by a
trained backbone and the posterior by per-task last-layer beliefs.
NLL-trained variance heads go back to \citet{nix1994mean} and power
deep ensembles \citep{lakshminarayanan2017simple}; the None ablation of
Table~\ref{tab:main_sde} is their per-task-scale analogue inside the
composed objective, and the belief term
$\psi^\top\Sigma_k\,\psi$ is what they lack. Bayesian last layers
(e.g.\ variational, \citealp{harrison2024variational}) are compared
against SCROLL at length in \citet{prochazka2026bethe}; proper-scoring
terminology follows \citet{gneiting2007strictly}. The
sequential-cavity and adaptive-filtering lineage is developed in
Appendix~\ref{app:future}.

\section{Benchmark generation and training details}
\label{app:details}


\paragraph{OU corpus.}
$dX=-\theta X\,dt+\sigma\,dW$ with $\theta=1$, $\sigma=\sqrt2$
(stationary $\mathcal{N}(0,1)$), simulated by the exact one-step
transition
$X_{t+h}=e^{-\theta h}X_t+\sqrt{\tfrac{\sigma^2}{2\theta}\bigl(1-e^{-2\theta h}\bigr)}\,\xi$,
$\xi\sim\mathcal{N}(0,1)$, on a grid $h=0.01$---sampling the analytic
transition density, so there is no discretisation bias. The lead-$\Delta$
conditional variance is
$V^\star(\Delta)=\tfrac{\sigma^2}{2\theta}\bigl(1-e^{-2\theta\Delta}\bigr)$,
state-independent. Each of the
$6{,}500$ windows starts from an independent stationary draw (one window
per realisation; the corpus is i.i.d.\ by construction). The path
functionals are monitored on the grid, both endpoints included---the
event and average labels therefore carry the grid's discretisation,
though the state transition itself is exact: the
event task is $\mathds{1}[\max_{[t,t+\Delta]}X\ge1.5]$ (barrier at
$1.5$ stationary sd), the ordinal task buckets the window time-average
into corpus-level quartiles ($K{=}4$). Generation uses a fixed seed:
the corpus is frozen once, like a UCI table, and the run seed controls
only split and initialisation.

\paragraph{Stochastic Lorenz-63 corpus.}
The $(10,28,\nicefrac83)$ drift with additive noise of amplitude $2.0$
on all three coordinates, integrated by Euler--Maruyama at
$dt=0.005$. $65$ parallel chains are initialised near $(1,1,25)$,
burned in for $2{,}000$ steps ($10$ time units), and then sampled with a
decorrelation gap of $100$ steps ($0.5$ time units) between the end of
one window and the start of the next, giving $6{,}500$ windows in
total. Tasks: $y^{(1)}=x(t{+}\Delta)$, $y^{(2)}=\mathds{1}[x(t{+}\Delta)>0]$
(attractor lobe), $y^{(3)}=$ corpus-level quartile of $z(t{+}\Delta)$.

\paragraph{Ground truth and the per-state probe.}
OU: $V^\star(\Delta)$ analytic (Section~\ref{sec:experiments}). Lorenz:
the conditional sd of $x(t{+}\Delta)$ is estimated by re-simulating
$200$ ensemble members per state with fresh noise. For the averaged
view of Figure~\ref{fig:sigma_delta_lorenz} this is done for $2{,}000$
attractor states and the plotted target is
$\smash{\sqrt{\mathbb{E}[\mathrm{Var}(x_{t+\Delta}\mid x_t)]}}$---the
score-optimal value for a single $\sigma_\text{obs}$, i.e.\ $V^\star$
with the heteroscedasticity gap averaged out
(Appendix~\ref{app:scroll})---with the band spanning the per-state
p10--p90 conditional sd. The per-state panel of
Figure~\ref{fig:sigma_delta} instead re-simulates \emph{every} test
state, so predicted and true variances are paired point by point; $V^\star$
spans $284\times$ across those states. Both are $10$-seed runs of the
protocol above at $\Delta{=}0.5$, with correlations reported as
mean~$\pm$~sd over seeds and the plotted cloud from the first seed,
subsampled by $V^\star$-rank for legibility. The Monte-Carlo targets
are themselves noisy at $200$ members per state, so the reported
correlations and the estimated $V^\star$ range carry simulation
error; the broad contrast between the pinned route's $1.3\times$
range and the estimated $284\times$ target range is nevertheless
large, and we attach no inferential meaning to the precise
correlation or range values.

\begin{figure}[t]
\centering
\begin{tikzpicture}
\begin{axis}[
  width=0.52\linewidth, height=3.4cm,
  xlabel={lead time $\Delta$}, title={\small stochastic Lorenz-63 (MC $V^\star$)},
  tick label style={font=\small}, label style={font=\small},
  legend style={font=\small, draw=none, fill=none},
  legend pos=north west, xmode=log, log ticks with fixed point,
  xtick={0.1,0.25,0.5,1}]
\addplot[name path=lo, draw=none, forget plot] table[x=delta, y=sd_p10]
  {figures/data/sde_sigma_lorenz.dat};
\addplot[name path=hi, draw=none, forget plot] table[x=delta, y=sd_p90]
  {figures/data/sde_sigma_lorenz.dat};
\addplot[gray, opacity=0.18, forget plot] fill between[of=lo and hi];
\addplot[black, thick] table[x=delta, y=sd_star]
  {figures/data/sde_sigma_lorenz.dat};
\addlegendentry{$\sqrt{\overline{V^\star}(\Delta)}$}
\addplot[blue, only marks, mark=*, mark size=1.8pt,
  error bars/.cd, y dir=both, y explicit]
  table[x=delta, y=pred_sd_mean,
        y error plus expr=\thisrow{pred_sd_hi}-\thisrow{pred_sd_mean},
        y error minus expr=\thisrow{pred_sd_mean}-\thisrow{pred_sd_lo}]
  {figures/data/sde_sigma_lorenz.dat};
\addlegendentry{predictive sd}
\end{axis}
\end{tikzpicture}
\caption{The averaged view of the tracking result on Lorenz-63 ($\pm1$ sd over $10$
seeds; grey band: per-state conditional sd $p10$--$p90$). The predictive sd
targets the $V^\star$ average (black) at every lead, overshooting at short
leads where mean-model error enters. Averaging is exactly what the per-state
panel of Figure~\ref{fig:sigma_delta} removes: a homoscedastic head can match
this curve, and the no-belief arm nearly does.}
\label{fig:sigma_delta_lorenz}
\end{figure}

\paragraph{Beijing PM$_{2.5}$.}
Hourly PM$_{2.5}$ and meteorology, 2010--2014 \citep{liang2015pm25}.
Gaps of up to $6$\,h in the PM$_{2.5}$ series are linearly interpolated;
windows touching longer gaps are dropped. Windows are non-overlapping
(stride $=$ lead $=24$\,h), leaving $N{=}1{,}680$. Input
($D{=}36$): the last $24$ hourly $\log(1{+}\text{PM}_{2.5})$ values,
current dew point, temperature, pressure, cumulated wind speed, snow
and rain hours, a $4$-way wind-direction one-hot, and month
$\sin$/$\cos$. Targets: $y^{(1)}=\log(1{+}\text{PM}_{2.5})$ at
$t{+}24$\,h; $y^{(2)}=\mathds{1}[\text{mean PM}_{2.5}\text{ over }
(t,t{+}24\,\text{h}]>75\,\mu\text{g}/\text{m}^3]$;
$y^{(3)}=$ band of PM$_{2.5}$ at $t{+}24$\,h with edges
$35/75/150\,\mu\text{g}/\text{m}^3$. The headline split is
chronological $60/20/20$, on which the run seed varies initialisation
only; the rolling origins (Appendix~\ref{app:results}) vary the
split itself.

\paragraph{Training protocol (all methods).}
Inputs standardised and regression targets centred on training-set
statistics; Adam with learning rate $0.03$, at most $5{,}000$
full-batch steps, early stopping on the summed validation NLL. Every
objective is optimised together with a backbone $\ell_2$ term
$\tfrac{\lambda_\psi}{2}\|W\|^2$ on the single hidden-layer weight
matrix, $\lambda_\psi{=}0.01$; the backbone carries no bias and, in
the LN arms, LayerNorm is non-affine (no learned gain or bias), so
every parameter that can globally rescale the features is regularised.
These conventions close the feature-scale gauge (next paragraph):
fitted $\alpha_k$ are interpretable only
relative to them, while predictive means and total variances are
gauge-invariant. The MAP
$\lambda$ grid is $\{0.01,0.1,0.3,1,3,10,30\}^2$ ($49$ runs,
validation-selected). Kendall arms train per-task log-scales $s_k$
jointly with the network \citep{kendall2018multi}: regression loss
$\tfrac12 e^{-2s}\,\mathrm{MSE}+s$ (so $e^{s}$ \emph{is} the reported
predictive sd) and $e^{-2s}\,\mathrm{NLL}+s$ per classification head.
The backbone variant $\{$relu, relu+LN, tanh, tanh+LN$\}$ is selected
per dataset, seed, and method by validation loss, and every arm of
Table~\ref{tab:main_sde} runs the identical protocol. ``One run''
throughout therefore means one weight/hyperparameter configuration;
the four-backbone validation sweep is common to every arm and is
included in every cost figure. Regression
calibration error is the mean absolute coverage gap of the central
predictive intervals,
$\operatorname{mean}_a\bigl|\widehat{\Pr}\bigl[y \in \mu(x) \pm
z_{(1+a)/2}\,\sigma(x)\bigr]-a\bigr|$ over nominal levels
$a\in\{0.05,0.10,\dots,0.95\}$ on the test set.

\paragraph{Learning the hyperparameters in one pass.}
The prior precision $\alpha$ and the likelihood parameters
($\sigma_\text{obs}$, ordinal cutpoints) enter the objective as ordinary
arguments, so the same gradient pass that trains $(\mu,\Sigma)$ and the
backbone also estimates them, with no outer tuning loop. They are
fitted to $F_\text{MT}$ itself---a predictive-score objective plus
prior overlap under the
shared cavity, not a marginal likelihood. An absolute $\alpha$ means
something only against a scale convention, since the backbone
parameterisation and its $\ell_2$ term fix the scale of $\psi$ and
$\alpha$ trades against that scale through the prior-overlap term:
every $\alpha$ quoted in this paper is under the single fixed
convention of the training protocol above, so
the cross-corpus comparison is a ratio within it rather than an
absolute magnitude \citep{prochazka2026bethe}. The gauge also fixes
the belief coordinates relative to the chosen task inventory: no
intrinsic meaning attaches to $\alpha_k$ compared across different
$K$. Under this fixed gauge the OU runs jointly settle at a large fitted
$\alpha$, a contracted belief term $\psi^\top\Sigma\,\psi$, and
variance carried by $\sigma_\text{obs}$---the empirical face of the
belief pruning; given the split non-identification
(\emph{Stability} below), no single causal direction is claimed.

\paragraph{Selecting the $\lambda$ grid fairly.}
The natural criterion for the grid---the unweighted validation sum
$\mathrm{MSE}+\mathrm{NLL}_1+\mathrm{NLL}_2$---is in mixed units.
Targets are centred but never divided by their scale, so on a corpus
whose state has spread $s$ the regression term enters at $\approx s^2$
against two log-losses pinned near unity, and the argmin is chosen
almost entirely by the regression task. Dividing the targets by their
training sd would remove that particular mismatch, and it is worth
being explicit about why we do not treat this as the answer:
standardisation is the \emph{manual, one-shot} version of what
$\sigma_\text{obs}^2$ does automatically. It fixes a single global
constant chosen before training, per task, from the training split; the
fitted noise scale tracks $c^2$ continuously at no search cost, and the
belief term tracks what no global constant can---scale that varies with
the input, which is the regime the rest of this paper is about. The
weighted sum needs the preprocessing step because it has no parameter
to absorb the units; composition has one. That is exactly the
mis-scaling defect Proposition~\ref{prop:scale} removes, so leaving it
inside our own baseline would score the baseline with the flaw we
claim to remove. Both grid rows of Table~\ref{tab:main_sde} therefore
re-select the \emph{same} $49$ runs on a common currency, scoring the
regression head as a Gaussian log likelihood at the predictive sd the
arm itself reports, so all three tasks enter in nats; the procedure is
bit-identical to the mixed-unit one when the criterion is not
switched. It changes almost nothing on OU or PM$_{2.5}$ ($\le0.024$
nats of summed test NLL; the two criteria select the identical run at
$38$ of those $40$ seed-by-head cells). On Lorenz, whose state spread
is far from one, they agree at $2$ of $20$, and the switch is worth
$0.150$ (CE) and $0.137$ (probit) nats to the baseline, moving the
ordinal head from $1.063$ to $0.907$ and from $1.218$ to $1.075$ at a
$\approx0.01$ nat cost to regression. The fair rows are the ones
reported throughout; since re-selection is a post-hoc argmin over an
identical sweep, their cost column reports that sweep's wall-clock as
measured under this table's conditions.

\paragraph{Stability.}
The variance split is not identified at finite $N$ and the objective is
not bounded below in $\sigma_\text{obs}$
(Appendix~\ref{app:scroll}), so the short-lead regime, where
$V^\star(\Delta)\to0$, is the place to look for collapse. We
observed none: no run was excluded, no divergence was seen, and across
every SCROLL-MT run reported here the smallest fitted
$\sigma_\text{obs}$ is $0.070$---on Lorenz at $\Delta{=}0.1$, where
$\sqrt{V^\star}$ is itself small---against $0.19$ at the short-lead OU
limit. Nothing floors $\sigma_\text{obs}$ itself, and no single
safeguard does: what keeps the degeneracy benign is the belief's
numerical floor $\varepsilon{=}10^{-4}$ on $\Sigma$, the
representation, and held-out scoring together. The floor contributes
$\varepsilon\|\psi_n\|^2$ to every plate variance, so it bounds $V_n$
away from zero only where the representation bounds $\|\psi_n\|$ away
from zero---layer norm fixes $\|\psi_n\|^2$ at $H$, an unnormalised
backbone does not---and held-out scoring is what penalises a
collapsing split rather than rewarding it.

\section{Detailed results}
\label{app:results}


\paragraph{Testbed properties, in full.}
\emph{(i) The likelihood family.} On the OU barrier event the arms split by
observation model alone. Every cross-entropy arm reaches $67\%$, below
the $80\%$ majority rate: the backbone and CE head are bias-free, so
the boundary is pinned at $x{=}0$, whose sign-agreement rate is
$67.5\%$; restoring a single bias term lifts a standalone CE head to
$89\%$. The threshold likelihood needs no such repair---its cutpoint
\emph{is} the offset---and every arm carrying it recovers the event at
$\approx0.27$ nats, MAP-probit at $\lambda{=}1$ and Kendall-probit
included, because SCROLL's observation models were ported into their
losses. Nor does any fixed family win across tasks: on the Lorenz
ordinal task the probit arms are \emph{worse} than CE, while SCROLL's
ordinal likelihood stays competitive (Diag) or wins outright (Full). A
per-task choice is unavoidable, and making it is composition's native
interface.

\emph{(ii) The loss weight.} The inert regime is the larger one: on OU and
PM$_{2.5}$ the $49$-run grid never beats its own $\lambda{=}1$ start
by $0.01$ nats, and on PM$_{2.5}$ it is up to $0.04$ nats
\emph{worse}. On Lorenz it collects $0.25$ (CE) and $0.18$
(probit) nats, but only once \emph{selected} in nats---see below,
where standardising the targets leaves that purchase
intact. Cost separates the arms far more
than the weights do ($13$--$18$\,s against $156$--$471$\,s per seed,
Table~\ref{tab:full_sde}), and the single run tunes \emph{more}: by
Corollary~\ref{cor:weights} it fits Gaussian noise scales, probit cutpoints and
$\alpha_k$ in the same pass, where the grid tunes the weights alone.
Kendall's learned weighting---also a single run---gives the
\emph{worst} Lorenz regression NLL at four times SCROLL-MT's
calibration error, so what the belief buys is not cheaper weighting but
input-dependent variance delivered \emph{stably}.

\emph{(iii) The event head's scope.} The event task separates arms only on
OU. On Lorenz the lobe label is a deterministic function of the state
at $\Delta{=}0.5$, so every arm that forecasts the state well
classifies it well (all within $0.166$--$0.189$ nats), and on
PM$_{2.5}$ the event head ties on the headline split and across the
rolling origins alike.

\paragraph{Full results.}
Table~\ref{tab:full_sde} reports all metrics behind
Table~\ref{tab:main_sde}: regression calibration error, event-task
accuracy, and wall-clock time per seed.

\begin{table}[t]
\centering\footnotesize
\setlength{\tabcolsep}{4.5pt}\renewcommand{\arraystretch}{0.92}
\caption{All metrics of the 3-task comparison (10 seeds,
protocol as in Table~\ref{tab:main_sde}).
Time = wall-clock per seed incl.\ tuning.
\textbf{Bold} = best per column within a system; \textit{italic} =
not significantly worse (one-sided paired $t$-test, $p\ge0.05$).}
\label{tab:full_sde}
\begin{tabular}{lrrrrrr}
\toprule
 & \multicolumn{2}{c}{regression} & \multicolumn{2}{c}{event (binary)} & \multicolumn{1}{c}{ordinal} & \\
 & NLL$\downarrow$ & CalErr$\downarrow$ & NLL$\downarrow$ & acc\,\%$\uparrow$ & NLL$\downarrow$ & time [s] \\
\midrule
\multicolumn{7}{l}{\emph{OU}} \\
SCROLL-MT (Diag) & 1.204 & 0.008 & 0.271 & \textit{89.3} & 0.881 & 13 \\
SCROLL-MT (Full) & 1.205 & \textit{0.008} & 0.265 & \textit{89.3} & 0.881 & 37 \\
SCROLL-MT (None, no belief) & 1.204 & \textit{0.009} & 0.271 & 89.2 & \textit{0.880} & 11 \\
\addlinespace[2pt]
MAP, CE heads, $\lambda{=}1$ & \textit{1.203} & \textit{0.009} & 0.417 & 67.1 & 0.954 & 5.3 \\
MAP, CE heads, $\lambda$ grid & 1.203 & 0.009 & 0.417 & 67.1 & 0.954 & 156 \\
Kendall, CE heads & \textbf{1.202} & \textit{0.008} & 0.417 & 67.1 & 0.954 & 5.6 \\
MLE $\sigma(x)$, CE heads & \textit{1.203} & \textit{0.008} & 0.417 & 67.1 & 0.954 & 4.3 \\
MAP, probit heads, $\lambda{=}1$ & \textit{1.203} & \textit{0.009} & 0.270 & 89.2 & \textit{0.880} & 8.5 \\
MAP, probit heads, $\lambda$ grid & \textit{1.203} & 0.009 & \textit{0.263} & \textit{89.3} & \textit{0.879} & 435 \\
Kendall, probit heads & 1.205 & \textit{0.008} & 0.266 & \textbf{89.4} & 0.884 & 8.0 \\
MLE $\sigma(x)$, probit heads & \textit{1.203} & \textbf{0.007} & 0.269 & 89.3 & \textit{0.880} & 8.6 \\
Laplace (Diag), probit heads & \textit{1.203} & \textit{0.009} & 0.270 & 89.2 & \textit{0.880} & 5.5 \\
Laplace (Full), probit heads & \textit{1.203} & \textit{0.009} & 0.270 & 89.2 & \textit{0.880} & 5.5 \\
SCROLL-MT (Diag), Ens-$5$ & 1.204 & 0.008 & 0.271 & \textit{89.3} & \textit{0.881} & 45 \\
MAP, CE heads, Ens-$5$ & \textit{1.202} & \textit{0.009} & 0.417 & 67.1 & 0.954 & 17 \\
MAP, probit heads, Ens-$5$ & \textit{1.203} & 0.009 & 0.270 & 89.2 & \textit{0.880} & 30 \\
\midrule
\multicolumn{7}{l}{\emph{stochastic Lorenz-63}} \\
SCROLL-MT (Diag) & 2.333 & \textit{0.016} & 0.170 & 92.5 & 1.032 & 18 \\
SCROLL-MT (Full) & \textbf{2.230} & \textbf{0.015} & 0.169 & \textit{92.6} & \textbf{0.843} & 29 \\
SCROLL-MT (None, no belief) & 2.596 & 0.098 & 0.168 & 92.3 & 1.069 & 11 \\
\addlinespace[2pt]
MAP, CE heads, $\lambda{=}1$ & 2.613 & 0.101 & 0.174 & \textit{92.7} & 1.154 & 7.5 \\
MAP, CE heads, $\lambda$ grid & 2.621 & 0.100 & \textbf{0.166} & 92.6 & 0.907 & 356 \\
Kendall, CE heads & 2.798 & 0.067 & 0.180 & 92.0 & 1.175 & 6.8 \\
MLE $\sigma(x)$, CE heads & 2.592 & \textit{0.017} & 0.182 & 91.8 & 1.128 & 8.6 \\
MAP, probit heads, $\lambda{=}1$ & 2.613 & 0.103 & 0.175 & \textit{92.7} & 1.252 & 10 \\
MAP, probit heads, $\lambda$ grid & 2.620 & 0.103 & \textit{0.169} & 92.6 & 1.075 & 471 \\
Kendall, probit heads & 2.810 & 0.065 & 0.180 & 92.0 & 1.249 & 9.6 \\
MLE $\sigma(x)$, probit heads & 2.612 & \textit{0.016} & 0.189 & 91.7 & 1.245 & 12 \\
Laplace (Diag), probit heads & 2.613 & 0.103 & 0.175 & \textit{92.7} & 1.253 & 9.1 \\
Laplace (Full), probit heads & 2.612 & 0.104 & 0.175 & \textit{92.7} & 1.251 & 8.8 \\
SCROLL-MT (Diag), Ens-$5$ & 2.295 & 0.033 & \textit{0.167} & \textit{92.7} & 1.004 & 90 \\
MAP, CE heads, Ens-$5$ & 2.608 & 0.104 & 0.172 & \textit{92.8} & 1.138 & 34 \\
MAP, probit heads, Ens-$5$ & 2.607 & 0.106 & 0.173 & \textbf{92.8} & 1.244 & 47 \\
\midrule
\multicolumn{7}{l}{\emph{Beijing PM$_{2.5}$ (real data, 24\,h lead)}} \\
SCROLL-MT (Diag) & 1.250 & \textit{0.024} & 0.450 & 78.5 & 1.203 & 0.8 \\
SCROLL-MT (Full) & \textit{1.255} & \textbf{0.020} & 0.489 & 77.7 & 1.212 & 1.3 \\
SCROLL-MT (None, no belief) & 1.311 & \textit{0.021} & 0.451 & 78.2 & 1.291 & 0.3 \\
\addlinespace[2pt]
MAP, CE heads, $\lambda{=}1$ & 1.312 & 0.026 & 0.451 & 78.0 & 1.229 & 0.2 \\
MAP, CE heads, $\lambda$ grid & 1.318 & \textit{0.023} & 0.456 & 77.6 & 1.229 & 10 \\
Kendall, CE heads & 1.312 & 0.029 & 0.458 & 78.4 & 1.238 & 0.2 \\
MLE $\sigma(x)$, CE heads & 1.305 & \textit{0.021} & 0.453 & 78.2 & 1.230 & 0.3 \\
MAP, probit heads, $\lambda{=}1$ & 1.312 & 0.030 & 0.452 & 78.2 & 1.288 & 0.3 \\
MAP, probit heads, $\lambda$ grid & 1.334 & \textit{0.023} & 0.460 & 77.8 & 1.297 & 15 \\
Kendall, probit heads & 1.312 & 0.031 & \textit{0.440} & 77.9 & 1.291 & 0.3 \\
MLE $\sigma(x)$, probit heads & \textit{3.011} & \textit{0.026} & 0.458 & 77.8 & 1.296 & 0.4 \\
Laplace (Diag), probit heads & 1.306 & 0.035 & 0.450 & 78.2 & 1.287 & 0.3 \\
Laplace (Full), probit heads & 1.310 & 0.030 & 0.448 & 78.2 & 1.288 & 0.3 \\
SCROLL-MT (Diag), Ens-$5$ & \textbf{1.243} & \textit{0.022} & \textit{0.443} & \textbf{79.3} & \textbf{1.197} & 3.9 \\
MAP, CE heads, Ens-$5$ & 1.303 & 0.030 & \textbf{0.440} & 77.9 & 1.215 & 1.0 \\
MAP, probit heads, Ens-$5$ & 1.308 & 0.029 & \textit{0.445} & 77.9 & 1.279 & 1.4 \\
\bottomrule
\end{tabular}
\end{table}

\paragraph{Why the headline comparison is a sum, and what it hides.}
The per-corpus totals quoted in Section~\ref{sec:experiments} sum the
three heads' test NLL. That is not a weighting we chose: \eqref{eq:fmt}
\emph{is} an unweighted sum of proper log scores, so scoring by the
unweighted test sum evaluates the estimand actually minimised, and
Corollary~\ref{cor:inert}'s caveat---that no scale-free principle picks
a point on a trade-off surface---concerns whether that point matches a
particular user's utility, which is a separate question from whether we
report our own objective consistently. The distinction matters most
where a trade-off is real, and on Lorenz it is: the fairly selected CE
grid is preferable to SCROLL-MT (Diag) on both discrete heads ($0.166$
vs $0.170$ on the event, $0.907$ vs $1.032$ on the regime) and loses
only the state head, by $0.29$ nats. The summed margin is therefore the
objective's own point on that surface, not a dominance claim---and the
arm that does dominate the grid there is SCROLL-Full, which wins the
state and regime heads and ties the event.

\paragraph{Standardising the targets.}
The mixed-units selection argument (\emph{Selecting the $\lambda$ grid
fairly}, Appendix~\ref{app:details}) is worth settling by measurement
rather than principle,
so we reran the Lorenz corpus with the regression target divided by its
training sd $s$ and nothing else changed. Standardisation shifts the
regression NLL by the constant $\log s$, leaving every \emph{difference}
below comparable with the numbers above, and it separates two claims
that the mixed-unit criterion had confounded.

The criterion swap is the half that was units. Once the targets are
standardised the two criteria select the same run on $9$ of $10$ seeds
on both heads---against $2$ of $20$ before---and the swap is worth
$0.005\pm0.005$ and $-0.000\pm0.000$ nats rather than $0.150$ and
$0.137$: re-selecting in nats was correcting a scale mismatch, exactly
as claimed, and corrects nothing once the mismatch is gone.

The grid's purchase over its own $\lambda{=}1$ start is the half that
was not. It survives standardisation at $0.291\pm0.015$ (CE) and
$0.174\pm0.010$ (probit), so it is not an artefact of our
preprocessing, and we do not claim it as one. What the target scale
decides is not \emph{whether} the weighted sum needs tuning but
\emph{where} it must look: unstandardised, the selected
$\lambda_1$ spreads over $[0.01,10]$; standardised, the argmin moves
onto the grid boundary $\lambda{=}(30,30)$ on $9$ of $10$ CE seeds,
where the sweep is truncated and the optimum may lie outside it. The
composed objective never looks. Written in the units of the weighted
sum its effective squared-error weight is $(2V_n)^{-1}$, and it falls
by $61\times$ against
a target-variance ratio of $63\times$---Proposition~\ref{prop:scale}'s
map, fitted in the same pass rather than searched over $49$ runs.

Nothing the paper claims for SCROLL-MT weakens on the standardised
corpus and its margin over the better fair grid is if anything larger
(the two corpora are not paired, so we claim only that it does not
shrink): $-0.215\pm0.023$ nats for Diag and $-0.505\pm0.021$ for Full,
against $-0.159$ and $-0.453$ before, at a $26$--$37\times$ cost ratio
and at a calibration error of $0.020$ against the grid's $0.084$. (The
seconds behind that ratio are larger than Table~\ref{tab:full_sde}'s
because this probe ran its arms concurrently; only the ratio, measured
under one set of conditions, is comparable across the two.) The arm standardisation helps most is the one with
least to fit: the no-belief None arm, which trails the CE grid by
$0.139$ nats unstandardised, ties it ($-0.017\pm0.017$).

\paragraph{The margin across target scales.}
Standardisation is one point on a scale axis, so we swept the axis:
Lorenz targets multiplied by $c\in\{\tfrac18,\tfrac12,1,2,8\}$,
everything else unchanged, at two fixed backbones. On tanh+LN---the
backbone SCROLL-MT (Diag) actually selects---the Diag margin over the
fair CE grid holds at every scale, $-0.19$ to $-0.31$ nats of summed
test NLL across the $64\times$ range, and Full holds on \emph{both}
backbones at every scale ($-0.41$ to $-0.51$). On the unnormalised
tanh backbone Diag never separates from the grid at any scale
($-0.00$ to $+0.14$): the representation limit of
Table~\ref{tab:routing}, present at every $c$, not a scale effect.
What drift the margins do show toward the extreme scales is the
optimiser's, not the objective's:

\paragraph{Equivariance in practice: objective versus optimiser.}
Proposition~\ref{prop:scale} is a statement about the minimiser; a
fixed-budget run need not reach it. Rescaling one task by $c{=}8$ and
comparing summed test NLL against the $c{=}1$ run plus the
proposition's constant leaves a gap of $0.216\pm0.013$ nats ($10$
seeds, s.e.; $0.074$ at $c{=}\tfrac18$). Warm-starting the $c{=}8$ run
at the $c{=}1$ solution pushed through the proposition's parameter map
closes the gap exactly ($0.000\pm0.000$): the mapped
objective--solution pair is equivariant to machine precision in this
tested range, and the entire gap is optimisation. Attribution
by arm agrees: mapping only the initialisation removes $39\%$ of it, a
$4\times$ step budget $24\%$, the two together $76\%$, while the
numerical floor $\varepsilon$ on $\Sigma$ contributes nothing
($0\%$)---from a scale-free start the run simply ends far from the
mapped optimum (the fitted $\alpha$ lands orders of magnitude off the
mapped value at $c{=}8$). The practical reading: the objective needs
no per-scale retuning, and a mapped warm start recovers whatever a
fixed budget leaves behind.

\paragraph{What equivariance does not buy.}
The reach of a \emph{search} is not equivariant even though the
optimum is: the compensating weight grows as $c^2$, so a fixed grid is
outrun by a scale mismatch it was not sized for---on Lorenz its argmin
sits on the boundary of the sweep---while $\sigma_\text{obs}^2$ tracks
$c^2$ continuously and at no search cost, by a measured $61\times$
against a variance ratio of $63\times$. That governs where the search
must look, not who wins: standardising the targets leaves SCROLL-MT's
margin intact (above), so the separation in
Section~\ref{sec:experiments} is not the scale mismatch. Two
implementation caveats. Proposition~\ref{prop:scale} concerns the
ideal covariance families: the implemented $\varepsilon I$ floor on
$\Sigma$ could in principle bind after the map and break exactness at
the boundary---in the tested $64\times$ range it does not, the
attribution probe's $\varepsilon$ arm contributing $0\%$ of the
observed gap. And the minimiser's
equivariance is exact while a fixed-budget optimiser's is not; the
previous paragraph measures that gap and attributes all of it
to optimisation---a mapped warm start closes it exactly.

\paragraph{Kernel recovery on the well-specified anchor.}
On OU every arm attains the entropy floor
$\tfrac12\log(2\pi e\,V^\star(\Delta))$ within $1.5\%$ for
$\Delta\ge0.25$ and $4.5\%$ at $\Delta{=}0.1$, so the anchor separates
nothing by NLL---which is the point of running it. What it does test is
the belief: \eqref{eq:fmt} places no cap on $\psi^\top\Sigma\,\psi$, and
at $H{=}50$ against a scalar state there is ample room for spurious
spread, yet the fitted predictive sd stays within $2\%$ of
$\sqrt{V^\star}$ at every lead time and $\sigma_\text{obs}$ within
$4\%$, the runs jointly settling at a large fitted $\alpha$ and a
contracted belief term
(median fitted $\alpha$ on the regression head: $379$, against $0.11$
on Lorenz; the gauge reading of Appendix~\ref{app:details}). A homoscedastic system is where a freely-routed variance
could most easily invent structure, and it does not.

\paragraph{The closed-routing arm and the binding-map-only cell.}
The closed-routing arm replaces the free belief of the regression head
with the conjugate posterior at the current backbone,
$\Sigma=(\alpha I+\sigma_\text{obs}^{-2}\Psi^\top\Psi)^{-1}$
recomputed from the training design matrix at every step, with that
head's data and prior terms replaced by the evidence of the same
conjugate model---the corner whose predictive variance is the
residual-independent leverage variance (Appendix~\ref{app:scroll}).
The other two heads' terms are untouched, so the backbone is trained
throughout by their shared-cavity terms plus the regression head's evidence.
Only the regression head is modified: the
two ordinal heads stay freely routed in every arm, no closed form
being available for them, so the head inventory is fixed. In this
headline closed arm two things change together in the modified head,
and only one of them is the route. Constraining $(\mu,\Sigma)$ to the
binding map while keeping the shared-cavity data term isolates
routing alone; substituting the conjugate model's evidence also changes
the estimand, from a composite predictive score to a marginal
likelihood. Running that binding-map-only cell separates the two: it
reproduces the evidence corner to within $0.014$ nats on every Lorenz
backbone (within $0.006$ on OU), so the pinned flatness of $V_n$ is
the binding map's doing, and the estimand swap adds nothing on these
corpora. On OU the five regression routes---None, closed,
binding-map-only, Diag and Full---tie
($1.202$--$1.203$ test NLL, predictive sd $0.809$--$0.812$ against
$\sqrt{V^\star}=0.795$), which is what a homoscedastic system entitles
the closed route to; the tracking separation is confined to the
heteroscedastic system, as the theory predicts.

Correlations are Pearson on the raw $V_n$ against $V^\star(x_n)$, and
because $V^\star$ spans $284\times$ they are influenced by the largest
states; the rank correlations are $0.41$ (closed), $0.56$ (Diag) and
$0.65$ (Full). The ordering is unchanged but the free-versus-closed gap
is much narrower on ranks ($0.56$ vs $0.41$) than on raw values ($0.61$
vs $0.17$), so the assumption-free form of the routing claim is not the
correlation at all but the range: the closed route's $V_n$ moves under
$1.3\times$ while $V^\star$ moves $284\times$. Against the residual
variance about each arm's own mean, Full sits within $3\%$ while Diag
is $12\%$ \emph{under}---the same ordering as the correlations, and a
further reason the paper reports both families rather than one.

\paragraph{Routing across backbones.}
The per-state probe fixes one backbone for every arm, which makes its
comparison matched but not automatically representative, so
Table~\ref{tab:routing} reruns every arm at all four. Two effects
separate there. The pinned route is flat wherever it is run: $V_n$
spans at most $1.26\times$ against a $284\times$ range in $V^\star$,
and never correlates with it ($|r|\le0.20$), so removing the binding is
what frees $V_n$ to vary at all, independently of the
representation---and of the estimand: the binding-map-only cell above
reproduces the corner on every backbone.
Whether that freedom is worth anything then depends on what the freed
family can express. Full improves on the pinned route on every
backbone, by $0.137$ to $0.314$ nats; Diag improves on it under layer
norm ($0.129$, $0.237$) and ties without ($-0.022\pm0.006$,
$-0.003\pm0.011$), its $V_n$ range falling from $46\times$ to
$16\times$ and its correlation from $0.61$ to $0.15$. A diagonal belief
on an unnormalised representation cannot reach $V^\star$; that is a
limit on the family, not on the routing, and it is why the two are
worth separating. On OU every route ties on every backbone (within
$0.007$ nats), and the belief adds no spread of its own: the freely
routed total sd stays within $0.02$ of the no-belief arm's own ratio to
$\sqrt{V^\star}$ on all four, that ratio being $1.02$ on three and
$1.06$ on relu$+$LN, where every arm degrades together. The anchor
carries no backbone exposure at all.

\begin{table}[t]
\centering\footnotesize
\setlength{\tabcolsep}{4.5pt}\renewcommand{\arraystretch}{0.92}
\caption{Routing across backbones on Lorenz-63 at $\Delta{=}0.5$
(10 seeds): test NLL and Pearson $r(V_n,V^\star(x_n))$ for
each belief family against the closed (evidence-pinned) route, at
every backbone rather than the single one the per-state probe fixes.
The arms of Table~\ref{tab:main_sde} select tanh$+$LN (Diag) and
tanh (Full), both $10/10$ by validation loss.}
\label{tab:routing}
\begin{tabular}{lrrrrrrr}
\toprule
 & \multicolumn{1}{c}{None} & \multicolumn{2}{c}{closed} & \multicolumn{2}{c}{Diag} & \multicolumn{2}{c}{Full} \\
\cmidrule(lr){2-2} \cmidrule(lr){3-4} \cmidrule(lr){5-6} \cmidrule(lr){7-8}
 & NLL$\downarrow$ & NLL$\downarrow$ & $r$ & NLL$\downarrow$ & $r$ & NLL$\downarrow$ & $r$ \\
\midrule
relu & 3.189 & 3.169 & -0.05 & 3.191 & +0.09 & 3.032 & +0.06 \\
relu$+$LN & 3.151 & 3.055 & +0.20 & 2.926 & +0.43 & 2.843 & +0.43 \\
tanh & 2.597 & 2.545 & +0.09 & 2.548 & +0.15 & 2.238 & +0.85 \\
tanh$+$LN & 2.620 & 2.569 & +0.17 & 2.333 & +0.61 & 2.256 & +0.81 \\
\bottomrule
\end{tabular}
\end{table}

\paragraph{Normalisation interaction under layer norm.}
On Lorenz with a tanh+LN backbone, SCROLL-MT (Diag) already reaches its
headline quality ($2.33$, $0.016$; Full improves further,
Table~\ref{tab:full_sde}) while \emph{no} MSE-trained MAP arm improves
under LN (all $\ge(2.61, 0.09)$), Kendall recovers part of the
calibration ($0.065$) but none of the NLL ($2.79$), and the NLL-trained
MLE $\sigma(x)$ arm recovers the calibration ($0.012$) but not the NLL
($2.59$): exploiting a normalised representation takes NLL training
\emph{and} the input-dependent belief.

\paragraph{Which covariance family, and why Diag is the default.}
The two families split by corpus rather than dominating one another.
Full is decisively better on Lorenz ($-0.104$ nats on the state head,
$-0.189$ on the regime, and it is the only arm to dominate the fairly
selected grid there rather than trade against it), and it tracks
$V^\star(x)$ far more closely ($r{=}0.81$ against $0.61$). On
PM$_{2.5}$ it is worse on every head, most on the event ($+0.038$), and
the same ordering appears across the rolling origins below
($+0.023\pm0.007$). The reading we take is that a full $H\times H$
belief at $N_\text{tr}\approx10^3$ is under-determined, and that a
probit head---having no $\sigma_\text{obs}$ to absorb surplus
spread---is where that shows first, the same asymmetry that makes an
ordinal head sensitive to an inflated belief elsewhere in this
appendix. Diag is therefore the default: it wins the corpus with real
data and least of it, at a fraction of Full's cost. Where the belief
has the most room, Full is the better arm, and the table reports both.

\paragraph{An approximate posterior selector: last-layer Laplace.}
A post-hoc Laplace approximation at the trained MAP point supplies a
standard approximate-posterior comparator: train the point estimate,
set each head's belief to the Gaussian its curvature prescribes (prior
precisions validation-selected per head), and predict through the same
plate variance. Architecture, protocol, covariance family, and
predictive form are matched to SCROLL-MT and the selector of $\Sigma$
differs---the objective there, the posterior here---but so do the
backbone and the mean, trained under the MAP loss rather than the
composite score, so this pair does not isolate routing as cleanly as
the binding-map-only cell above; its role is to show the result
survives a widely deployed posterior selector. Paired per backbone on
the state head, the ordering is the routing table's: on OU the two tie
($|\Delta\text{NLL}|\le0.03$ on every backbone and family); on Lorenz
the objective-selected belief wins on all four backbones, by
$0.03$--$0.29$ nats (Diag) and $0.19$--$0.38$ (Full); on PM$_{2.5}$
Full wins on all four ($0.03$--$0.07$) while Diag is mixed ($-0.05$ to
$+0.04$, behind only on tanh+LN)---the same family limit as above.
Table~\ref{tab:full_sde} carries the validation-selected rows.

\paragraph{Deep-ensemble arms.}
Member $k$ of an Ens-$5$ row is the named arm retrained at seed
$s+1000k$, so the five members differ in initialisation and in nothing
else. On the Gaussian head the mixture predictive is collapsed to its
first two moments \citep{lakshminarayanan2017simple},
$\bar\mu=\operatorname{mean}_k\mu_k$ and
$\bar V=\operatorname{mean}_k(V_k+\mu_k^2)-\bar\mu^2$; on the ordinal
and event heads the members' category probabilities are averaged and
scored as one predictive distribution. Both sides of the table are
ensembled, because a single-pass method against a five-member ensemble
is not a like-for-like comparison: the table therefore carries the
cost-mismatched contrast ($1$ run against $5$) and the cost-matched one
($5$ against $5$). Two caveats. Ensembling is applied at
$\lambda{=}1$ rather than to the grid arms, which would cost $49\times5$
runs and buy little here since the weight is inert on PM$_{2.5}$; and
the ten Ens-$5$ replicates at seeds $s{=}5\dots14$ draw members from an
overlapping pool, so their error bars are not independent of the
single-run rows above them and the two should not be differenced as if
they were.

\paragraph{Sharing and interference.}
The single-task reference trains each head \emph{alone} on a backbone of
its own, with the loss terms of \eqref{eq:fmt} unchanged---the Gaussian
head with its diagonal belief, the ordinal heads with theirs---so the
only difference from the joint arm is which tasks are active, not the
head inventory, the observation models, or the optimiser. $K$
independent models cost $K$ backbones; the joint arm costs one. Writing
$\text{interference}_k = \text{NLL}_k(\text{joint}) -
\text{NLL}_k(\text{alone})$, positive meaning sharing costs task $k$
($10$ seeds, backbone validation-selected per head as everywhere else):

\begin{center}\footnotesize
\begin{tabular}{lrrr}
\toprule
 & state & event & regime \\
\midrule
OU & $+0.000\pm0.001$ & $+0.004\pm0.001$ & $+0.001\pm0.001$ \\
Lorenz-63 & $+0.057\pm0.009$ & $+0.003\pm0.001$ & $+0.072\pm0.024$ \\
PM$_{2.5}$ (real) & $-0.006\pm0.007$ & $-0.018\pm0.006$ & $-0.014\pm0.004$ \\
\bottomrule
\end{tabular}
\end{center}

\noindent The three regimes track how much one feature map can serve at
once, the same axis Corollary~\ref{cor:inert} turns on. On OU the three
observables are functionals of one scalar state and sharing is free. On
Lorenz they compete and it costs. On the real series---$N{=}1{,}680$
against $D{=}36$ inputs, with the three observables read off the same
future concentration---the heads inform one another and sharing pays on
both discrete tasks. The joint arm of this probe is SCROLL-MT (Diag)
itself, so the PM$_{2.5}$ row is a statement about the model the rest of
the paper reports.

\paragraph{Does the real arm survive re-splitting?}
The headline PM$_{2.5}$ split is chronological and its ten seeds vary
initialisation only, so every $\pm$ on that corpus measures
initialisation spread, not generalisation. Five rolling origins move
the split instead: origin $j$ trains on $[0,(0.4+0.1j)N)$ with $10\%$
validation and test blocks immediately after, so the evaluation window
walks forward through the series. Uncertainty is then reported
\emph{between} origins ($n{=}5$), since seeds inside one origin share
its split---pooling all $50$ runs would treat a single split as five
independent ones and roughly halve every interval. Pooled that way,
SCROLL-MT (Diag)'s advantage on the state head holds against every
weighting baseline at $-0.061$ to $-0.088$ nats, every one of them at
$4.8$ between-origin standard errors or more. On the regime head it holds against the MAP arms
and Kendall-probit ($-0.029$ to $-0.081$) but not against Kendall-CE
($-0.020\pm0.014$), the one weighting baseline it does not separate
from there. The event head ties, as it does on the headline split.
Sharing (above) is the quantity the origins do \emph{not} resolve: the
point estimate favours the joint model on all three heads, but the
per-origin sign holds at only three to four of five and every
between-origin interval covers zero ($-0.028\pm0.016$,
$-0.010\pm0.007$, $-0.013\pm0.007$).

\paragraph{The frozen-$\alpha$ ablation.}
The frozen-$\alpha$ arms rerun SCROLL-MT (Diag) with all three prior
precisions fixed at a shared
$\alpha\in\{0.01,0.1,1,10,10^2,10^3,10^4\}$ and excluded from the
optimiser; everything else (backbone sweep, seeds, early stopping) is
identical, and the self-tuned row of Table~\ref{tab:alpha_sde} is the unchanged
SCROLL-MT (Diag) arm of Table~\ref{tab:main_sde}. A per-head grid at
the same resolution would already take $7^3=343$ runs. Three
details. \textbf{(i)}~On OU every $\alpha$ ties because
the \emph{total} predictive sd stays within $2.3\%$ of $\sqrt{V^\star}$
at every $\alpha$: score-optimality pins the total, the prior only
routes it between $\sigma_\text{obs}$ and the belief, and on a
homoscedastic system any split scores the same. Sensitivity is
confined to where the belief carries input-dependent spread---exactly
the cells SCROLL-MT wins ($+0.12$ nats on the Lorenz ordinal task at
$\alpha\ge100$, alongside $+0.05$ nats of regression NLL and
twice the calibration error).
\textbf{(ii)}~The loop's one significant purchase is $0.01$ of average
coverage on PM$_{2.5}$ (the one-parameter property noted in
Section~\ref{sec:experiments});
there the validation-selected $\alpha$ scatters across the full grid,
a flat landscape where the loop is wasted compute.
\textbf{(iii)}~A scope note: a single fixed $\alpha{=}0.01$ is also
within $0.01$ nats of the fitted run on all three corpora---a verdict
they deliver only in hindsight, at their $N$, and not a default one
could have set in advance. The implicit fit is studied extensively
in the single-task setting by \citet{prochazka2026bethe}; stress-testing
it across multi-task regimes beyond these corpora is an open empirical
gap (Appendix~\ref{app:future}).

\begin{table}[t]
\centering\footnotesize
\setlength{\tabcolsep}{4.5pt}\renewcommand{\arraystretch}{0.92}
\caption{Frozen-$\alpha$ ablation: SCROLL-MT (Diag) with all
three prior precisions frozen at a shared $\alpha$ vs the
precisions the run tunes itself (10 seeds, backbone
validation-selected per seed as in Table~\ref{tab:main_sde};
the grid row selects $\alpha$ per seed by validation loss).
\textbf{Bold}/\textit{italic} as in Table~\ref{tab:main_sde}.}
\label{tab:alpha_sde}
\begin{tabular}{lrrrrrrrrr}
\toprule
 & \multicolumn{3}{c}{OU} & \multicolumn{3}{c}{Lorenz-63} & \multicolumn{3}{c}{PM$_{2.5}$ (real)} \\
\cmidrule(lr){2-4} \cmidrule(lr){5-7} \cmidrule(lr){8-10}
 & NLL$\downarrow$ & CalErr$\downarrow$ & ord$\downarrow$ & NLL$\downarrow$ & CalErr$\downarrow$ & ord$\downarrow$ & NLL$\downarrow$ & CalErr$\downarrow$ & ord$\downarrow$ \\
\midrule
self-tuned $\alpha_k$ ($1$ run) & \textit{1.204} & 0.008 & \textit{0.881} & \textit{2.333} & \textit{0.016} & \textit{1.032} & \textit{1.250} & 0.024 & 1.203 \\
$\alpha$ grid, val.-selected ($7$ runs) & 1.204 & \textit{0.008} & \textit{0.880} & \textbf{2.327} & \textit{0.015} & \textbf{1.015} & \textit{1.252} & \textbf{0.015} & \textit{1.201} \\
\addlinespace[2pt]
fixed $\alpha{=}0.01$ & \textbf{1.203} & \textit{0.008} & \textit{0.880} & \textit{2.328} & \textit{0.015} & \textit{1.018} & \textit{1.248} & \textit{0.018} & \textbf{1.199} \\
fixed $\alpha{=}0.1$ & \textit{1.204} & 0.008 & \textit{0.881} & \textit{2.329} & \textit{0.015} & \textit{1.018} & \textit{1.254} & \textit{0.020} & \textit{1.200} \\
fixed $\alpha{=}1$ & 1.205 & 0.008 & \textit{0.881} & 2.354 & \textbf{0.014} & 1.044 & \textit{1.251} & 0.023 & 1.205 \\
fixed $\alpha{=}10$ & 1.205 & \textit{0.008} & \textit{0.880} & 2.387 & 0.019 & 1.092 & \textbf{1.248} & 0.026 & 1.205 \\
fixed $\alpha{=}100$ & \textit{1.204} & \textbf{0.007} & \textbf{0.880} & 2.389 & 0.031 & 1.157 & \textit{1.249} & 0.024 & 1.206 \\
fixed $\alpha{=}10^{3}$ & \textit{1.204} & \textit{0.008} & \textit{0.880} & 2.380 & 0.033 & 1.148 & \textit{1.248} & \textit{0.022} & \textit{1.203} \\
fixed $\alpha{=}10^{4}$ & 1.206 & \textit{0.008} & 0.882 & 2.384 & 0.033 & 1.149 & \textit{1.255} & 0.023 & 1.204 \\
\bottomrule
\end{tabular}
\end{table}

\section{Open directions}
\label{app:future}

A short validation study fixes its scope deliberately
(Section~\ref{sec:outlook}); this appendix records the doors it opens.

\paragraph{Scaling in the number of tasks.}
The comparison here sits at $K{=}3$, where the $49$-run $\lambda$ grid
is expensive but still feasible. The regime likelihood composition is
built for is $K{\ge}5$, where grids are out of the question and the
honest tuned baselines become budget-matched random search and learned
uncertainty weighting. The open question is whether the advantage grows
with $K$ as the $O(K)$ parameterisation predicts. One weighting
composition does not remove also grows with $K$: the task
\emph{inventory}. Listing the same observable twice doubles its data
terms, so ``no task weights'' is a statement at a fixed task list, and
the list prices tasks in integers; at $K{=}3$ the inventory is given,
but at the $K$ this paragraph contemplates choosing it becomes the
design decision---the one lever the units argument of
Proposition~\ref{prop:scale} says nothing about.

\paragraph{Jointly trained horizon sets.}
Lead times are separate runs here. Treating the horizons themselves as
composed tasks---one head per $\Delta$ on a shared backbone---would make
the $\sigma(\Delta)$ curve of Figure~\ref{fig:sigma_delta} the output of
a \emph{single} model and turn cross-horizon consistency (for OU, the
analytic monotone $V^\star(\Delta)$) into a checkable structural
property.

\paragraph{Interference in the shared backbone.}
Corollary~\ref{cor:mt_score} is stated at fixed $\psi$, and per-task
score-optimality need not survive joint training of $\psi$: one feature
map need not support all $K$ optima at once. The gap---each head under joint
training against the same head trained alone---is measured in
Appendix~\ref{app:results}, and it does not vanish: sharing costs
$0.06$--$0.07$ nats on Lorenz. Two mechanisms would produce that and
this design separates neither: a feature map of finite width unable to
serve three optima at once, and the optimisation itself, where
competing gradients settle on a compromise the same width might have
avoided. Widening $H$ at a fixed task inventory would tell them apart.
What sets the sign is open, and it is
the quantity a $K$-scaling study would have to track.

\paragraph{Per-task prior precisions.}
Two claims hide inside the prior-fit result and only the first is
settled here. The
implicit fit replaces the validation grid---one run where the reference
needs seven, and which fixed $\alpha$ would have been safe is knowable
only after running it. Whether the tasks want \emph{different}
$\alpha_k$ is a second question, and the reason it is open is
arithmetic: the per-head oracle grid costs $|A|^K$ runs, $343$ at
$K{=}3$ for the resolution used here. At $K{=}2$ it costs $49$, so we
probed it there, on a synthetic pair built to pull the two heads apart
(a Gaussian head, whose $\sigma_\text{obs}$ absorbs whatever spread the
belief does not, against an ordinal head that has no such absorber).
The preference is unambiguous and the payoff is not. Selecting
$(\alpha_1,\alpha_2)$ per seed by validation lands \emph{off} the
diagonal on $37$ of $40$ seeds, and the implicit fit separates the two
heads on every seed---$\alpha_\text{reg}\in[17.5,31.7]$ against
$\alpha_\text{ord}\in[0.72,1.34]$, a $\approx21\times$ gap, in the same
direction the oracle picks, and against a within-corpus spread never
above $6\times$ on the three corpora of Table~\ref{tab:main_sde}. Yet
the oracle's advantage over the best shared $\alpha$ is
$0.007\pm0.005$ and $0.003\pm0.004$ nats on the two heads: seven times
the compute, bounded by noise. The one-run fit sits within $0.015$
nats of both grids, marginally behind. So per-task precisions are a
structure the objective expresses consistently and, on this corpus,
does not get paid for---which leaves open whether a regime exists where
it is paid, and makes the $K$-scaling above the place to look, since
that is where the oracle a shared grid approximates becomes
unaffordable.

\paragraph{A joint observation model.}
For multiple Gaussian heads, composing the tasks as a product gives a
joint Gaussian whose noise
covariance is \emph{diagonal}, a strict subset of the joint likelihood: a
full $\Sigma_\text{obs}$ also carries the residual correlation left after
conditioning on the shared backbone. On these corpora that restriction is
not a mild approximation---the Lorenz event label is $\mathds{1}[x(t{+}\Delta)>0]$
and its regression target is $x(t{+}\Delta)$, so one observable is a
deterministic function of another, and the PM$_{2.5}$ band and state share
the same value. Nothing claimed here rests on the difference:
Corollary~\ref{cor:mt_score} is a per-task \emph{marginal} statement, and
each marginal reaches its best representable score optimum at fixed
$\psi$, irrespective of unmodelled cross-task dependence. What the
composition as written cannot represent is the \emph{joint} predictive---the
coherence between observables of the same window. Two regression heads with
correlated noise make the size of that gap measurable, and with independent
beliefs the coupled data term stays closed form, since the belief adds no
cross-head covariance:
$V_n=\Sigma_\text{obs}+\operatorname{diag}(\psi_n^\top\Sigma_1\psi_n,
\psi_n^\top\Sigma_2\psi_n)$. In a controlled probe over noise correlation
$\rho$ the full model recovers $\rho$ ($\hat\rho=0.80$ at $\rho=0.8$),
improves the joint test NLL by $0.163\pm0.009$ nats---about $80\%$ of the
$-\tfrac12\mathbb{E}\log(1-\rho_n^2)$ available given the predictive
covariance, where
$\rho_n=\Sigma_{\text{obs},12}/\sqrt{V_{1,n}V_{2,n}}$ is the plate's
effective predictive correlation---leaves both marginals unchanged, and at $\rho=0$ neither
invents correlation ($\hat\rho=-0.000$) nor pays for the extra parameter
($0.000\pm0.007$). This is also the sharper form of the paper's own
argument: scalar squared-error weights correspond to diagonal entries
of the noise \emph{precision}, and a weighted sum has no slot for an
off-diagonal term at any weights; a coupled likelihood carries the
full covariance---equivalently, the off-diagonal precision
interactions---that no choice of scalar task weights can express, and
Corollary~\ref{cor:weights} identifies the scalar weights with the
precisions of a covariance the composition currently constrains to be
diagonal. The construction extends the same way to a latent composed
task---marginalising a categorical head gives a mixture output, at the cost
of the per-factor closed form.

\paragraph{Baselines on other axes.}
Gradient-surgery methods \citep{chen2018gradnorm,yu2020pcgrad} modify
gradients rather than loss weights; they are orthogonal to likelihood
composition and could be composed with it rather than compared against
it. Deep ensembles and a post-hoc Laplace selector are run
(Appendix~\ref{app:results}); the deep-uncertainty baselines still
missing are a variational Bayesian last layer
\citep{harrison2024variational} and the heterogeneous multi-output GP
whose construction this head mirrors
\citep{moreno2018heterogeneous}---tractable at these $N$ and the
natural non-neural reference.

\paragraph{More real series, and more of them.}
The rolling origins of Appendix~\ref{app:results} take the real-data
claims off a single split, but five origins over one $1{,}680$-window
series is what that corpus can support, and they overlap heavily. The
quantity they leave unresolved---whether sharing helps or is merely
harmless---is the one that would benefit most from several independent
series rather than more origins or more seeds within this one.

\paragraph{Richer systems.}
Higher-dimensional dynamics (stochastic PDEs, larger attractors) where
the backbone must work harder, and further real series with externally
dictated thresholds (air quality, hydrology), to which the
PM$_{2.5}$ recipe of Appendix~\ref{app:details} transfers unchanged.

\paragraph{Theory.}
The data term identifies only the total predictive variance
$\sigma_\text{obs}^2+\psi^\top\Sigma\,\psi$; the split is selected by
the $O(1/N)$ prior term (Appendix~\ref{app:scroll}). Making the
finite-$N$ behaviour of that split precise, and extending the
entropy-floor argument of the OU anchor beyond well-specified
regression,
are natural next steps---alongside the programme below.

\paragraph{The sequential cavity and adaptive filtering.}
SCROLL's shared cavity is an exchangeable approximation to a
\emph{sequential} cavity, whose order-dependence artifact disappears
exactly when the data carry a canonical order---which temporal data do.
The sequential-cavity predictive is then the prequential one; in the
linear-Gaussian case the construction collapses to the Kalman filter,
and freely-routed variances become online, score-driven
noise-covariance adaptation---the classical territory of adaptive
filtering \citep{mehra1972approaches,sarkka2009vbakf}. Making free
routing well-defined along a single filtering pass (SCROLL's routing is
defined on an optimisation trajectory) is the open question we consider
most promising, with this paper's composed likelihoods as its natural
multi-observable emission model.

\end{document}